\providecommand{\U}[1]{\protect\rule{.1in}{.1in}}
\newtheorem{thm}{Theorem}
\newtheorem{lemma}{Lemma}
\newtheorem{remark}{Remark}
\newtheorem{prop}{Proposition}
\DeclareMathOperator*{\argmin}{arg\,min}
\DeclareMathOperator*{\argmax}{arg\,max}
\begin{document}

\title{Optimal Transport Relaxations \\with Application to Wasserstein GANs}
\author{Saied Mahdian, Jose Blanchet, Peter Glynn\\Department of Management Science and Engineering, Stanford University}
\date{}
\maketitle

\begin{abstract}
We propose a family of relaxations of the optimal transport problem which
regularize the problem by introducing an additional minimization step over a
small region around one of the underlying transporting measures. The type of
regularization that we obtain is related to smoothing techniques studied in
the optimization literature. When using our approach to estimate optimal
transport costs based on empirical measures, we obtain statistical learning
bounds which are useful to guide the amount of regularization, while
maintaining good generalization properties. To illustrate the computational
advantages of our regularization approach, we apply our method to training
Wasserstein GANs. We obtain running time improvements, relative to current
benchmarks, with no deterioration in testing performance (via FID). The
running time improvement occurs because our new optimality-based threshold
criterion reduces the number of expensive iterates of the generating networks,
while increasing the number of actor-critic iterations.
\end{abstract}

\section{Introduction}

Optimal transport costs, which include the Wasserstein Distance and the
Earth-Mover-Distance as special cases, have become useful tools in machine
learning and statistics
\cite{kolouri2017optimal,arjovsky2017wasserstein,abadeh2015distributionally,kusner2015word,cuturi2013sinkhorn,blanchet2019quantifying}.
The optimal transport cost between two distributions is computed (in its
primal form) as a minimization problem, in which the cost of transporting one
distribution to another is minimized over all possible joint distributions,
leading to linear program (see for example,\cite{villani2003topics}).

Optimal transport provides great flexibility when comparing (probability)
measures and histograms. The transportation cost function (which we refer to
as the cost function) can be used to capture key geometric characteristics
\cite{kolouri2017optimal}. It can be also used to compare discrete vs
continuous distributions directly, without introducing smoothing, in contrast
to alternatives such as the Kullback-Leibler divergence (see
\cite{arjovsky2017wasserstein,genevay2016stochastic} for more details). Also,
by judiciously choosing the cost function, a Wasserstein distance can generate
either the topology corresponding to weak convergence or the total
variation distance.

In data-driven applications, one needs to estimate the optimal transport cost
by means of sampled data. This involves using an empirical measure, say
$\mu_{n}$, as a surrogates for the underlying population probability measure,
say $\mu_{\infty}$. However, this direct approach fails to recognize that
empirical measures are just imperfect descriptions of the underlying
probabilities, and a small amount of perturbation in the empirical measure
also may yield equally valid descriptions of the underlying probabilities.
Adopting this perspective is particularly important in light of the fact
that non-parametric empirical estimators of the Wasserstein distance converge
slowly (at rate $O\left(  n^{-1/d}\right)  )$ where $d$ is the underlying
dimension of the distribution and $n$ the number of samples, see
\cite{dudley1969speed,weed2017sharp}. It is natural to take the view that
plausible variations of the data can be used to facilitate the estimation of
optimal transport costs.

Using this insight, we provide a relaxation which regularizes the optimal
transport cost between, $\mu_{n}$ and $\mu$, say $D_{\widetilde{c}}\left(
\mu_{n},\mu\right)  $ (depending on some cost function $\widetilde{c}\;$).
Our relaxation takes the generic form
\begin{equation}
G_{\delta}\left(  \mu_{n},\mu\right)  =\inf\{D_{\widetilde{c}}\left(
\nu,\mu\right)  :\nu\in\mathcal{D}_{\delta}\left(  \mu_{n}\right)  \},\label{M}
\end{equation}
where $\mathcal{D}_{\delta}\left(  \mu_{n}\right)  $ is a suitable region of
`size' $\delta$ around $\mu_{n}$. The region $\mathcal{D}_{\delta}\left(
\mu_{n}\right)  $ will typically\ be defined in terms of an optimal transport
neighborhood of size $\delta$ around $\mu_{n}$, namely $\mathcal{D}_{\delta
}\left(  \mu_{n}\right)  =\{\nu :D_{c}\left(  \mu_{n},\nu\right)  \leq\delta\}$,
for some optimal transport cost $D_{c}\left(  \mu_{n},\nu\right)  $ depending on
a cost function $c$. So, as $\delta\rightarrow0$, we recover the standard
optimal transport cost. 

We stress that $\mathcal{D}_{\delta}\left(  \mu_{n}\right)  $ could be
defined using criteria other than optimal transport, but given the flexibility
mentioned earlier, we focus on optimal transport neighborhoods as stated
earlier. We can also introduce a neighborhood around both $\mu_{n}$ and
$\mu$ to define the inf. This modification can also be studied with the methods
that we present. 

The map $\mu_{n}\mapsto G_{\delta}\left(  \mu_{n},\mu\right)  $ is
intuitively a more regular object than $G_{0}\left(  \mu_{n},\mu\right)
=D_{\widetilde{c}}\left(  \mu_{n},\mu\right)  $ as it is less sensitive to
small perturbations of $\mu_{n}$. Of course, this type of regularity is also
achieved by maximizing over a neighborhood of $\mu_{n}$ (instead of minimizing),
but this operation leads to computational complications because the optimal
transport cost is a convex functional. 
The dual formulation of the optimal transport costs can be used to connect our
relaxation, at least formally, to smoothing techniques that are often used in
the non-smooth convex optimization literature \cite{nesterov2005smooth}.

As indicated earlier, the regularization approach that we take is particularly
meaningful given the slow rates of convergence in the empirical estimation of
Wasserstein distances. Moreover, since the estimated Wasserstein distance is a positive random variable,
the statistical error is likely to often have a right-tail bias, thus the
minimization operation that we apply in (\ref{M}) to regularize the
Wasserstein distance is also sensible as a means of mitigating this bias. However, we
need to be careful to not overcompensate. So, we also provide statistical
learning bounds which can be used to ensure a choice of $\delta$ which enables
the use of $G_{\delta}\left(  \mu_{n},\mu\right)  $, plus a small correction
term, as an upper bound for $D_{\widetilde{c}}\left(  \mu_{\infty},\mu\right)
$. These statistical learning bounds are presented in Theorem
\ref{thm:delta_sel}. The parameter $\delta>0$ could also be chosen by a
cross-validation procedure.

There are other regularization methods to estimate optimal transport costs.
Some of these techniques require some smoothness or absolute
continuity between the measures involved; this occurs, for example, when using
entropic regularization,
\cite{cuturi2013sinkhorn,sanjabi2018convergence,gulrajani2017improved}. Others
impose low rank constraints, as in \cite{forrow2019statistical}, in the
setting of domain adaptation, and others
(\cite{sanjabi2018convergence,gulrajani2017improved}) focus on specific
applications such as Wasserstein GANs.

Our relaxation technique does not require smoothing or low rank
properties. It acts directly at the same level of generality as the original
optimal transport formulation. However, we are able to show that $G_{\delta
}\left(  \mu_{n},\mu\right)  $ can often be evaluated directly and
conveniently in terms of $D_{\widetilde{c}}\left(  \mu_{n},\mu\right)  $,
leading to a variation of the optimal transport cost formulation which can
then be used in conjunction with any of the regularization methods mentioned
earlier. So, we do not see our work as a competitor to these regularization
methods. Our approach can be reasonably viewed a
pre-conditioning step which can be applied before any regularization tool that
uses additional data structure.

As an application of our framework, we introduce a regularized Wasserstein GAN
formulation which takes the form
\[
\min_{\theta}\min_{\mathcal{W}\left(  \mu_{n},\nu\right)  \leq\delta
}\mathcal{W}\left(  \mu_{\theta},\nu\right)  =\min_{\theta}G_{\delta}\left(
\mu_{n},\mu_{\theta}\right)
\]
where $\mathcal{W}\left(  \mu_{\theta},\nu\right)  $ is a Wasserstein distance
between $\mu_{\theta}$ and $v$, and we are choosing $\widetilde{c}=c$, also
coinciding with the metric used to define $\mathcal{W}$. The parameter
$\theta$ represents the design of the generative network. The standard
Wasserstein GAN formulation, \cite{arjovsky2017wasserstein}, is recovered by
setting $\delta=0$. In Section \ref{sec:dro_wgans} we show that under mild
assumptions,
\begin{equation}
\min_{\theta}\min_{\mathcal{W}\left(  \mu_{n},\nu\right)  \leq\delta
}\mathcal{W}\left(  \mu_{\theta},\nu\right)  =\min_{\theta}\left(
\mathcal{W}\left(  \mu_{\theta},\mu_{n}\right)  -\delta\right)  ^{+}.\label{GAN1}
\end{equation}

Therefore, it is easily seen, after taking the gradients, that the number of
iterates of the generative network, parameterized by $\theta$, is reduced
relative to the actor-critic iterates, represented by $f$. While this implementation
device (i.e. iterating the actor critic more often than the generator) is used
in practice to speed up training times, our approach is theoretically
supported from an optimality perspective. The inner minimization problem we
introduce yields an optimal regularization form, which corresponds to
`flattening' the optimization surface in the parameter space $\theta$. The
amount of flattening is governed by $\delta$, which should
correspond to the degree of ambiguity in the data, measured from a statistical
point of view.
In summary, our Optimal Transport Relaxation (OTR) formulation suggests that training of the
generative network can be reduced without loss of performance. We validate our
findings by experimenting on two datasets: MNIST and CIFAR10.

Finally, we comment, owing to a duality result given in Theorem \ref{Thm0},
that $G_{\delta}\left(  \mu_{n},\mu_{\theta}\right)  $ admits an economic
interpretation as a distributionally robust revenue maximization problem in
which an agent wishes to select a pricing policy which is robust to
perturbations in a customer's demand. While we do not exploit this
interpretation directly in this paper, we believe that this formulation is of
independent interest and thus it is worth exposing it in our Introduction.  

The rest of the paper is organized as follows. In Section
\ref{sec:tractability} we introduce the standard optimal transport problem,
together with our novel OTR formulation. We compare the dual of the standard
optimal transport problem to its robustified analogue. We show a strong
duality result in the sense that the robustified optimal transport dual and
primal achieve the same value. Next, using a general duality result in the
distributionally robust optimization literature \cite{blanchet2019quantifying}, we provide a convenient representation for the primal optimal transport
problem. We use this representation to obtain closed form expressions for the
contribution of the artificial player introduced in our distributionally
robust formulation. These closed form expressions, in particular, include
formulation (\ref{GAN1}). In Section \ref{sec:drot_stats}, we discuss
statistical learning bounds which provide generalization guarantees for our
empirical estimator. We argue that our OTR-based estimator is intuitively more
desirable than the standard empirical estimator for optimal transport costs,
because it is directly seen to be smaller than the standard estimator.  Nevertheless,
Theorem \ref{thm:delta_sel} guarantees that it can be used to build an upper
bound for the underlying optimal transport cost with high probability. We then
provide numerical evidence to demonstrate our intuition. Our
numerical examples suggest that our OTR estimator is
often a better upper bound than the standard Wasserstein estimator.
The proofs of all theorems are provided in the Appendix. 

\section{Problem Formulation, Interpretations and
Tractability\label{sec:tractability}}

We start by formulating the standard optimal transport problem. To do so, we
shall introduce notation which will also be useful when describing our
proposed formulation. Throughout the paper we will consider distributions
supported on metric spaces $\mathcal{S}_{X}$ and $\mathcal{S}_{Y}$ with
metrics $d_{X}$ and $d_{Y}$, respectively. We
assume, for simplicity in the exposition that the spaces are complete,
separable and compact.

We shall use $X$ to denote a generic random variables taking values in
$\mathcal{S}_{X}$. Likewise, a generic random variable $Y$ will take values in
$\mathcal{S}_{Y}$. The space of Borel probability measures defined on
$\mathcal{S}_{X}$ and $\mathcal{S}_{Y}$ are defined as $\Pi_{X}$ and $\Pi_{Y}$, respectively. 
We use $\Pi_{X,Y}$ to denote the set of all couplings between
$X,Y$ (i.e. joint Borel probability measures on $\mathcal{S}_{X}\times\mathcal{S}_{Y}$). 
Further, $\Pi_{X,Y}(\mu_{0},\nu)$ is the subset of
$\Pi_{X,Y}$ such that $X\sim\mu_{0}$ and $Y\sim\nu$ (i.e. $X$ follows
distribution $\mu$ and $Y$ follows distribution $\nu$).

Given a generic element $\pi\in\Pi_{X,Y}$, $\pi_{X}$ is the marginal
distribution of $X$ and $\pi_{Y}$ is the marginal distribution of $Y$. So,
$\pi\in\Pi_{X,Y}\left(  \mu_{0},\nu\right)  $ implies that $\pi_{X}=\mu_{0}$ and
$\pi_{Y}=\nu$.

The standard optimal transport problem, also known as the Monge-Kantorovich
problem, can be written as (see \cite{villani2003topics})
\[
\mathcal{P}_{0}:D_{\widetilde{c}}\left(  \mu_{0},\nu\right)  =\min
\{\mathbb{E}_{\pi}\widetilde{c}(X,Y):\pi\in\Pi_{X,Y}(\mu_{0},\nu)\}
\]
where $\widetilde{c}:\mathcal{S}_{X}\times\mathcal{S}_{Y}\rightarrow
\lbrack0,\infty)$ is a lower semi-continuous function. Clearly,
$D_{\widetilde{c}}\left(  \mu_{0},\nu\right)  $ is the solution of a linear
programming problem (albeit, an infinite dimensional one). 
We now consider the corresponding dual. 
First, let $C\left(  \mathcal{S}_{X}\right)  $ and
$C\left(  \mathcal{S}_{Y}\right)  $ be the space of continuous functions on
$\mathcal{S}_{X}$ and $\mathcal{S}_{Y}$, respectively. Next, define 
\[
\mathcal{A}\left(  \widetilde{c}\right)  =\{\left(  \alpha,\beta\right)
\in\mathcal{S}_{X}\times\mathcal{S}_{Y}:\alpha(x)+\beta(y)\leq\widetilde{c}(x,y)\text{ for all }x\in\mathcal{S}_{X},y\in\mathcal{S}_{Y}\},
\]
then, the dual problem formulation of $\mathcal{P}_{0}$ is
\[
\mathcal{\bar{P}}_{0}:\sup\{\mathbb{E}_{\mu_{0}}\alpha(X)+\mathbb{E}_{\nu
}\beta(Y):\left(  \alpha,\beta\right)  \in\mathcal{A}\left(  \widetilde{c}\right)  \}.
\]
It is known (see \cite{villani2003topics}) that strong duality holds.

To define our relaxed optimal transport
formulation, we introduce the region
\[
\mathcal{D}_{\delta}\left(  \mu_{0}\right)  =\{\nu:D_{c}\left(  \mu
_{0},\nu\right)  \leq\delta\}.
\]
We employ a lower semi-continuous cost function $c:\mathcal{S}_{X}\times\mathcal{S}_{X}\rightarrow\lbrack0,\infty)$ 
satisfying $c\left(x,x\right)  =0$, so that $\mathcal{D}_{0}\left(  \mu_{0}\right)  =\{\mu_{0}\}$. 
As indicated in (\ref{M}), we are interested in
\[
G_{\delta}\left(  \mu_{0},\nu\right)  =\min\{D_{\widetilde{c}}\left(  \mu
,\nu\right)  :\mu\in\mathcal{D}_{\delta}\left(  \mu_{0}\right)  \}.
\]
We have replaced the $\inf$ in (\ref{M}) by $\min$ because
 $\mathcal{D}_{\delta}\left(  \mu_{0}\right)  $ is a compact set in
the weak convergence topology (Prohorov's theorem) and the optimal transport cost, as the supremum of
linear and continuous functionals (by duality), is lower semicontinuous. 

In terms of the dual problem $\mathcal{\bar{P}}_{0}$, our relaxed
formulation then takes the form

\[
G_{\delta}\left(  \mu_{0},\nu\right)  =\min_{\mu\in\mathcal{D}_{\delta}\left(
\mu_{0}\right)  }\sup_{\left(  \alpha,\beta\right)
\in\mathcal{A}\left(  \widetilde{c}\right)  }\mathbb{E}_{\mu}\alpha
(X)+\mathbb{E}_{\nu}\beta(Y).
\]
The next result indicates that duality holds in this representation, meaning,
that $\min$ and $\sup$ can be exchanged, this will serve to provide useful
interpretations for $G_{\delta}\left(  \mu_{0},v\right)  $.

\begin{thm}
\label{Thm0}
\begin{align*}
G_{\delta}\left(  \mu_{0},\nu\right)  =\sup_{\left(  \alpha
,\beta\right)  \in\mathcal{A}\left(  \widetilde{c}\right)  }\min_{\mu
\in\mathcal{D}_{\delta}\left(  \mu_{0}\right)  }\mathbb{E}_{\mu}\alpha(X)+\mathbb{E}_{\nu}\beta(Y).
\end{align*}
\label{thm:duality}
\end{thm}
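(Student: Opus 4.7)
The plan is to reduce the claim to a minimax exchange on a bilinear saddle problem. The outer minimization is already linear in $\mu$, and after invoking Kantorovich strong duality for the inner optimal transport problem $\mathcal{P}_0$ the inner supremum becomes linear in $(\alpha,\beta)$ as well. Once the problem is in bilinear form, Sion's minimax theorem should swap $\min$ and $\sup$, provided we can verify convexity, compactness, and the appropriate semicontinuity on the $\mu$-side. So the proof really has two conceptual steps: invoking the already-quoted strong duality of $\mathcal{P}_0$ and checking the hypotheses of Sion.

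First I would rewrite the objective. Strong duality for $\mathcal{P}_0$ (cited from \cite{villani2003topics}) gives, for every admissible $\mu$,
\[
D_{\widetilde{c}}(\mu,\nu)=\sup_{(\alpha,\beta)\in\mathcal{A}(\widetilde{c})}\mathbb{E}_{\mu}\alpha(X)+\mathbb{E}_{\nu}\beta(Y),
\]
and since $\mathcal{S}_X,\mathcal{S}_Y$ are compact and $\widetilde{c}$ is lower semicontinuous, the supremum may be restricted to $\widetilde{c}$-conjugate pairs lying in $C(\mathcal{S}_X)\times C(\mathcal{S}_Y)$; call this restricted family $\mathcal{A}_c(\widetilde{c})$. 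Substituting this into $G_\delta(\mu_0,\nu)=\min_{\mu\in\mathcal{D}_\delta(\mu_0)}D_{\widetilde{c}}(\mu,\nu)$ produces precisely the saddle expression displayed right before the theorem statement, with sup taken over continuous pairs.

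Next I would apply Sion's minimax theorem to $\Phi(\mu,(\alpha,\beta))=\mathbb{E}_\mu\alpha(X)+\mathbb{E}_\nu\beta(Y)$ on $\mathcal{D}_\delta(\mu_0)\times\mathcal{A}_c(\widetilde{c})$. The bilinearity of $\Phi$ trivially supplies quasi-convexity in $\mu$ and quasi-concavity in $(\alpha,\beta)$. The set $\mathcal{A}(\widetilde{c})$ is convex because it is defined by pointwise linear inequalities. The set $\mathcal{D}_\delta(\mu_0)$ is convex: given two couplings $\pi_0,\pi_1$ with first marginal $\mu_0$ and expected cost at most $\delta$, the mixture $t\pi_1+(1-t)\pi_0$ is a coupling of $\mu_0$ with $t\nu_1+(1-t)\nu_0$ whose expected cost is still at most $\delta$. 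For weak compactness of $\mathcal{D}_\delta(\mu_0)$, tightness follows from Prohorov's theorem on the compact space $\mathcal{S}_X$ (as already noted in the paper), and weak closedness from the fact that $\nu\mapsto D_c(\mu_0,\nu)$ is a supremum of weakly continuous affine functionals, hence weakly lower semicontinuous. Finally, for each fixed $(\alpha,\beta)\in\mathcal{A}_c(\widetilde{c})$, the map $\mu\mapsto \mathbb{E}_\mu\alpha$ is weakly continuous on $\mathcal{D}_\delta(\mu_0)$ since $\alpha\in C(\mathcal{S}_X)$ and $\mathcal{S}_X$ is compact. With all Sion hypotheses in hand, the min and sup exchange and yield the stated identity.

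The main obstacle is not the algebra but the topological bookkeeping. The two points that deserve explicit justification are (i) weak compactness of $\mathcal{D}_\delta(\mu_0)$, which leans on the compactness assumption on $\mathcal{S}_X$, and (ii) the reduction of the Kantorovich dual to pairs in $C(\mathcal{S}_X)\times C(\mathcal{S}_Y)$, which is needed so that $\Phi(\cdot,(\alpha,\beta))$ is weakly continuous (not just measurable) on $\mathcal{D}_\delta(\mu_0)$. Both are standard in optimal transport theory — the second via $\widetilde{c}$-transforms — but must be invoked explicitly for Sion to be applicable; once invoked, the exchange step is immediate and the theorem follows.
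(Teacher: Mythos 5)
Your proposal is correct and follows essentially the same route as the paper: both reduce the statement to Sion's minimax theorem applied to the bilinear functional $\mathbb{E}_{\mu}\alpha(X)+\mathbb{E}_{\nu}\beta(Y)$, verifying convexity of $\mathcal{D}_{\delta}(\mu_{0})$ and $\mathcal{A}(\widetilde{c})$, weak compactness of $\mathcal{D}_{\delta}(\mu_{0})$ via Prohorov, and weak continuity of the objective in $\mu$ using continuity of $\alpha$ on the compact space. Your added care about weak closedness of $\mathcal{D}_{\delta}(\mu_{0})$ (via lower semicontinuity of $D_{c}(\mu_{0},\cdot)$) is a worthwhile detail the paper leaves implicit, but it does not change the argument.
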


\bigskip

The above theorem can be used to provide a formal interpretation of our
relaxation as a smoothing technique related to Nesterov's smoothing
\cite{nesterov2005smooth}. 
We have
\begin{align}
G_{\delta}\left(  \mu_{0},v\right)  =\sup_{-\alpha,\beta\in\mathcal{A}\left(
\widetilde{c}\right)  }\inf_{\mu\in\mathcal{D}_{\delta}\left(  \mu_{0}\right)
}E_{v}\beta\left(  Y\right)  -E_{\mu}\alpha\left(  X\right)  =\sup
_{-\alpha,\beta\in\mathcal{A}\left(  \widetilde{c}\right)  }\left(  E_{v}%
\beta\left(  Y\right)  -\phi\left(  \alpha;\mu_{0}\right)  \right)  \label{Map}
\end{align}
where $\phi\left(  \alpha;\mu_{0}\right)  =\sup_{\mu\in\mathcal{D}_{\delta
}\left(  \mu_{0}\right)  }E_{\mu}\alpha\left(  X\right)  $ is a convex
function of $\alpha$. The above representation coincides in form with the smoothing
operator technique introduced by Nesterov, see \cite{nesterov2005smooth},
equation (2.2). The resulting smooth mapping in Nesterov's representation is to be considered as a function of $v$, 
namely $v\mapsto G_{\delta}\left(  \mu_{0},v\right)$.

While we believe that it is interesting to study the transformation
(\ref{Map}) in future research for the purpose of smoothing optimal transport
problems, we shall focus on studying $G_{\delta}\left(  \mu_{0},\nu\right)  $. 
Note that controlling the size of $\delta$ will guarantee the validity of
statistical bounds when estimating optimal transport costs from empirical data.

In addition to the smoothing interpretation given by (\ref{Map}), Theorem
\ref{Thm0} also admits an economic interpretation. Consider an agent who
offers a transportation service to two customers. One of them wishes to
transport a pile of sand out of his/her backyard (this pile of sand is
modeled according to distribution $\mu_{0}$, which represents the demand for
the transportation service), while the other customer wishes to cover a sinkhole in
his/her own backyard (the profile of the sinkhole is modeled by distribution
$v$). It would cost $c\left(  x,y\right)  $ to transport mass from
location $x$ to location $y$ if the customers arrange to solve this
transportation problem among themselves. So, the agent would wish to charge a
price $\alpha\left(  x\right)  $ per unit of mass transported from location
$x$ to the first customer, a price $\beta\left(  y\right)  $ per unit of mass
transported from location $y$ to the second customer, and would do so in such
a way that it is cheaper to pay these prices than to pay the cost of
transporting directly without the intervention of the agent, so $\alpha\left(
x\right)  +\beta\left(  y\right)  \leq c\left(  x,y\right)  $. But, of course,
the agent wishes to maximize the total profit and this yields the dual
interpretation for transporting items, encoded by distributions $\mu_{0},\nu$. Theorem
\ref{Thm0} indicates that $G_{\delta}\left(  \mu_{0},v\right)  $ solves a
distributionally robust revenue maximization problem, in which the agent
selects a policy which is robust to perturbations in the shape of the pile of
sand reported by the first customer.

Next, we provide another representation for $G_{\delta}(\mu_{0},\nu)$, which
forms the basis for the design of gradient and subgradient algorithms and further
simplifications. 

\begin{thm}
\begin{align}
G_{\delta}(\mu_{0},\nu)=(-1)\cdot\min_{\lambda\geq0}\quad\left\{
\lambda\delta+\max_{\pi\in\Pi_{W,Z}(\mu_{0},\nu)}\quad\mathbb{E}_{\pi}\left[
h(W,Y,\lambda)\right]  \right\} \label{pb_dro_dual1}
\end{align}
where $h:\mathcal{S}_{X}\times\mathcal{S}_{Y}\times\mathbb{R}_{+}\rightarrow\mathbb{R}$ and $h(w,y,\lambda)=\sup_{x}\left\{  -\widetilde{c}(x,y)-\lambda c(x,w)\right\}  $.
\label{thm:thm1}
\end{thm}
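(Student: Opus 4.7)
My plan is to recast $G_{\delta}(\mu_0,\nu)$ as a single infimum over a three-variate joint law, Lagrangify the $\delta$-constraint, and then collapse the free intermediate variable by pointwise minimization, which is exactly what will surface the function $h$.

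First, I would use the gluing lemma to flatten the two layers of coupling hidden inside $G_{\delta}$. Membership of $\mu$ in $\mathcal{D}_{\delta}(\mu_0)$ is the existence of a coupling $\pi' \in \Pi_{W,X}(\mu_0,\mu)$ with $\mathbb{E}_{\pi'}[c(X,W)] \leq \delta$, and $D_{\widetilde{c}}(\mu,\nu)$ is itself an infimum over $\pi \in \Pi_{X,Y}(\mu,\nu)$. Gluing $\pi'$ and $\pi$ along their shared $X$-marginal produces a joint law $\gamma$ on $(W,X,Y)$ with $W$-marginal $\mu_0$ and $Y$-marginal $\nu$; letting $\mu$ vary simply frees the intermediate $X$-marginal of $\gamma$. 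Hence
\[
G_{\delta}(\mu_0,\nu)\;=\;\inf\bigl\{\mathbb{E}_{\gamma}[\widetilde{c}(X,Y)] : \gamma_W=\mu_0,\ \gamma_Y=\nu,\ \mathbb{E}_{\gamma}[c(X,W)]\leq\delta\bigr\}.
\]

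Next, I would invoke strong Lagrangian duality for the moment constraint. The feasible set is convex and weakly compact by Prohorov, the objective and constraint are linear and weakly lower semi-continuous in $\gamma$, and Slater holds for $\delta>0$ (any $\gamma$ supported on $\{X=W\}$ is strictly feasible). Strong duality then yields
\[
G_{\delta}(\mu_0,\nu)\;=\;\sup_{\lambda\geq 0}\Bigl\{-\lambda\delta+\inf_{\gamma}\mathbb{E}_{\gamma}\bigl[\widetilde{c}(X,Y)+\lambda c(X,W)\bigr]\Bigr\}.
\]
For fixed $\lambda$, I collapse the inner infimum by first fixing the $(W,Y)$-marginal $\pi \in \Pi_{W,Y}(\mu_0,\nu)$ and then optimizing the conditional law of $X$ given $(W,Y)$, which is an unrestricted probability kernel. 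The inner pointwise minimization gives $\inf_{x}\{\widetilde{c}(x,y)+\lambda c(x,w)\}=-h(w,y,\lambda)$, so
\[
\inf_{\gamma}\mathbb{E}_{\gamma}\bigl[\widetilde{c}(X,Y)+\lambda c(X,W)\bigr]\;=\;-\sup_{\pi\in\Pi_{W,Y}(\mu_0,\nu)}\mathbb{E}_{\pi}[h(W,Y,\lambda)].
\]
Rewriting $\sup_{\lambda\geq 0}\{-\lambda\delta-(\cdot)\}$ as $-\min_{\lambda\geq 0}\{\lambda\delta+(\cdot)\}$ yields (\ref{pb_dro_dual1}).

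The main obstacle is making the Lagrangian swap rigorous. This can be handled by Sion's minimax theorem after an a priori bound on $\lambda$ reduces the dual variable to a compact interval, or, more directly, by specializing the general Wasserstein-DRO duality of \cite{blanchet2019quantifying}, which is essentially purpose-built for constraints of the form $\mathbb{E}_{\gamma}[c(X,W)]\leq\delta$. A secondary technicality is measurable selection in the pointwise minimization over $x$: one needs $h(\cdot,\cdot,\lambda)$ to be measurable and an $\varepsilon$-optimal conditional kernel to exist, both of which are routine under compactness of $\mathcal{S}_X$ and lower semi-continuity of $\widetilde{c}$ and $c$.
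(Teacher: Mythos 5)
Your proposal is correct, and its first half --- recasting $G_{\delta}(\mu_0,\nu)$ via the gluing lemma as a single linear program over laws $\gamma$ of $(W,X,Y)$ with $\gamma_W=\mu_0$, $\gamma_Y=\nu$ and the budget constraint $\mathbb{E}_{\gamma}[c(X,W)]\leq\delta$ --- is exactly the paper's first step. You diverge in how the duality is executed. The paper dualizes \emph{all} constraints at once, introducing $\lambda\geq 0$ for the budget and continuous potentials $h_1,h_2$ for the two marginals, applies Sion's minimax theorem (with a Skorokhod/Fatou argument to establish the required semicontinuity in $\pi$), reads off the admissibility condition $\sup_x\{-\widetilde{c}(x,y)-\lambda c(x,w)\}\leq h_1(w)+h_2(y)$ from the inner unconstrained maximization over $\pi$, and then (implicitly invoking Kantorovich duality a second time) converts the optimization over $(h_1,h_2)$ back into a maximization over couplings in $\Pi_{W,Y}(\mu_0,\nu)$. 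You instead dualize only the budget constraint, keep the marginal constraints primal, and collapse $X$ by optimizing the conditional kernel of $X$ given $(W,Y)$ pointwise, which produces $-h(w,y,\lambda)$ directly and lands on the coupling form without that second duality step. Your route is arguably more transparent at that stage, but it pays two prices the paper's does not: (i) strong duality via Slater requires $\delta>0$ (the paper's Sion argument needs no constraint qualification, since the set of couplings is weakly compact), so you should either treat $\delta=0$ separately or note that both sides then reduce to the classical Kantorovich problem up to a limit in $\lambda$; and (ii) you need a measurable selection, or an $\varepsilon$-optimal kernel, for the pointwise minimization over $x$, which you correctly flag and which is indeed routine under compactness of $\mathcal{S}_X$ and lower semicontinuity of $\widetilde{c}$ and $c$. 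With those two points addressed, the argument goes through and yields the stated identity.
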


The above result provides further insight into the smoothness properties
introduced by our relaxation technique. For instance, min-max representation
justifies understanding our relaxation as a regularization technique as in \cite{esfahani2018data,blanchet2016robust}.
Also, consider the case
$\mathcal{S}_{X}=\mathcal{S}_{Y}$ and $c=d_{X}$. Then, the function $h(w,z,\lambda
)$ becomes $\lambda$-Lipschitz in the $w$ argument. In particular, for all
$w_{1},w_{2}\in\mathcal{S}_{X}$,

\[
h(w_{1},z,\lambda)-h(w_{2},z,\lambda)\leq\sup_{x}\left\{  \lambda
c(x,w_{2})-\lambda c(x,w_{1})\right\}  \leq\lambda d_{X}(w_{1},w_{2}).
\]
So, Theorem \ref{thm:thm1} implies that solving for $G_{\delta}(\mu_{0},\nu)$ is
equivalent to solving a standard optimal transport problem with measures
$\mu_{0},\nu$ and a cost function that which replaces $\widetilde{c}(x,y)$ by
a cost function which is $\lambda$-Lipschitz in $w$ and $\lambda$ is regularized.

In view of Theorem \ref{thm:thm1}, we define
\[
g(\lambda,\mu_{0},\nu)=\lambda\delta+\max_{\pi\in\Pi_{W,Y}(\mu_{0},\nu)}\quad\mathbb{E}_{\pi}\left[  h(W,Y,\lambda)\right]  .
\]
Thus, $G_{\delta}(\mu_{0},\nu)=-1\cdot\min_{\lambda\geq0}\,g(\lambda,\mu
_{0},\nu)$. The function $h(\cdot)$ is convex in $\lambda$ and subsequently
$g(\cdot)$ is a convex function of $\lambda$. Hence, (\ref{pb_dro_dual1}) is a
convex optimization problem. Moreover, since $\lim_{\lambda\rightarrow\infty
}g(\lambda,\mu_{0},\nu)=\infty$, the optimal solution set for
(\ref{pb_dro_dual1}) is bounded. Next, we provide a result which can be used
as a basis for a subgradient algorithm to compute $G_{\delta}(\mu_{0},\nu)$. 

\begin{thm}
\label{thm_gd}If $\mathcal{S}_X,\mathcal{S}_Y$ are convex subsets of $\mathbb{R}^d$ (for $d \in \mathbb{N}$), and $\widetilde{c},c$ are continuous, and
$\widetilde{c}(\cdot,y)+\lambda c(\cdot,w)$ is a strictly convex function for $\lambda \geq 0,w$ and $y$, then
$h$ is differentiable in $\lambda$, and the left-hand partial derivative of $g(\lambda,\mu_{0},\nu)$ with respect to $\lambda$ is
\[
\delta+\min_{\pi \in \Pi^{\ast}({\lambda})}
\mathbb{E}_{\pi}\left[  \frac{\partial}{\partial\lambda}h(W,Y,\lambda
)\right],
\]
and the right-hand partial derivative of $g(\lambda,\mu_{0},\nu)$ with respect to $\lambda$ is
\[
\delta+ \max_{\pi \in \Pi^{\ast}({\lambda})}
\mathbb{E}_{\pi}\left[  \frac{\partial}{\partial\lambda}h(W,Y,\lambda
)\right],
\]
where $\Pi^{\ast}({\lambda})$ is set of  optimal solutions to the problem
\begin{align}
\max_{\pi}\quad\mathbb{E}_{\pi\in\Pi_{W,Y}(\mu_{0},\nu)}\left[  h(W,Y,\lambda
)\right]\label{gd_thm_subproblem}
\end{align}

\end{thm}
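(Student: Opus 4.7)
The plan is to prove the result in two stages: first establish that $h(w,y,\cdot)$ is differentiable at each $\lambda\ge0$ via the classical envelope/Danskin theorem, and then derive the one-sided partial derivatives of $g$ by applying a Danskin-type argument to the supremum over couplings.

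For the first stage, fix $(w,y)$ and consider $\varphi(x;\lambda)=-\widetilde{c}(x,y)-\lambda c(x,w)$. Since $\mathcal{S}_X$ is compact and $\widetilde{c},c$ are continuous, the supremum in the definition of $h(w,y,\lambda)$ is attained, and the strict convexity hypothesis on $\widetilde{c}(\cdot,y)+\lambda c(\cdot,w)$ for $\lambda\ge0$ guarantees that the maximizer $x^{\ast}(w,y,\lambda)$ is unique. Berge's Maximum Theorem then makes $x^{\ast}$ continuous in $\lambda$, so the envelope theorem yields
\[
\frac{\partial h}{\partial\lambda}(w,y,\lambda)=-c(x^{\ast}(w,y,\lambda),w),
\]
where the derivative at $\lambda=0$ is interpreted as a right derivative.

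For the second stage, write $g(\lambda,\mu_{0},\nu)=\lambda\delta+F(\lambda)$ with $F(\lambda)=\max_{\pi\in\Pi_{W,Y}(\mu_{0},\nu)}\mathbb{E}_{\pi}[h(W,Y,\lambda)]$. Because $h(w,y,\cdot)$ is a supremum of affine functions of $\lambda$, it is convex in $\lambda$, so each $\mathbb{E}_{\pi}[h]$ is convex and $F$ is convex as a pointwise supremum. By Prohorov's theorem, $\Pi_{W,Y}(\mu_{0},\nu)$ is compact in the weak topology; continuity and boundedness of $h$ on the compact set $\mathcal{S}_X\times\mathcal{S}_Y$ yield weak continuity of $\pi\mapsto\mathbb{E}_{\pi}[h(\cdot,\cdot,\lambda)]$, hence $\Pi^{\ast}(\lambda)$ is non-empty and weakly compact, and moreover the multifunction $\lambda\mapsto\Pi^{\ast}(\lambda)$ is upper hemicontinuous.

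The concluding Danskin-type step produces both one-sided derivatives. For any $\pi\in\Pi^{\ast}(\lambda)$ and $\epsilon>0$, $F(\lambda+\epsilon)-F(\lambda)\ge\mathbb{E}_{\pi}[h(W,Y,\lambda+\epsilon)-h(W,Y,\lambda)]$, so dividing by $\epsilon$, letting $\epsilon\downarrow0$ and using the first stage gives $F'_{+}(\lambda)\ge\max_{\pi\in\Pi^{\ast}(\lambda)}\mathbb{E}_{\pi}[\partial_{\lambda}h]$. For the reverse inequality, pick $\pi_{\epsilon}\in\Pi^{\ast}(\lambda+\epsilon)$, use convexity of $h$ in $\lambda$ to bound $[h(W,Y,\lambda+\epsilon)-h(W,Y,\lambda)]/\epsilon\le\partial_{\lambda}h(W,Y,\lambda+\epsilon)$, and extract a weakly convergent subsequence $\pi_{\epsilon}\Rightarrow\pi^{\ast}\in\Pi^{\ast}(\lambda)$ using upper hemicontinuity. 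Dominated convergence (using boundedness of $c$ on the compact domain) transfers the limit inside the expectation, giving $F'_{+}(\lambda)\le\max_{\pi\in\Pi^{\ast}(\lambda)}\mathbb{E}_{\pi}[\partial_{\lambda}h]$. A symmetric argument with $\epsilon\uparrow0$ yields the left derivative as a minimum, and adding the derivative $\delta$ of the linear term $\lambda\delta$ completes the proof. The main obstacle is the rigorous execution of this Danskin argument in the infinite-dimensional space of couplings, specifically justifying the exchange of limit, expectation, and supremum, and verifying upper hemicontinuity of $\Pi^{\ast}(\cdot)$ in the weak topology; once these are in place, the convexity of $F$ ensures the one-sided derivatives exist and the bounds pin them down exactly.
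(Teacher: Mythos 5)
Your proposal is correct and follows essentially the same route as the paper: both establish differentiability of $h$ in $\lambda$ from strict convexity (unique maximizer) plus Berge's maximum theorem, and both reduce the one-sided derivatives of $g$ to an envelope theorem for the outer maximization over the Prohorov-compact set $\Pi_{W,Y}(\mu_{0},\nu)$, using continuity of $\pi\mapsto\mathbb{E}_{\pi}[h]$ and joint continuity of $(\pi,\lambda)\mapsto\mathbb{E}_{\pi}[\partial_{\lambda}h]=\mathbb{E}_{\pi}[-c(x^{\ast}(W,Y,\lambda),W)]$. The only difference is that you execute the final Danskin step by hand (the two-inequality argument with a weakly convergent subsequence of $\pi_{\epsilon}$), whereas the paper verifies the same hypotheses and invokes Corollary 4 of Milgrom and Segal as a black box.
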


\begin{remark}
Theorem \ref{thm_gd} still holds if the strict convexity condition for $\widetilde{c}(\cdot,y)+\lambda c(\cdot,w)$
is replaced with the condition that 
$\argmin_{x\in\mathcal{S}_X}\{\widetilde{c}(x,y)+\lambda c(x,w)\}$ 
is a singleton for all $\lambda \geq 0,w$ and $y$.
\end{remark}

\begin{remark}
 Function $g$ is differentiable at any point $\lambda$ if and only if 
 the set $\{ \mathbb{E}_{\pi}\left[  \frac{\partial}{\partial\lambda}h(W,Y,\lambda) \right] |\; \pi \in  \Pi^{\ast}({\lambda}) \}$ 
 is a singleton (for more details, see Corollary 4 of \cite{milgrom2002envelope}).
\end{remark}

Theorem \ref{thm_gd} suggests implementing a subgradient method \cite{bertsekas2015convex} to solve
problem (\ref{pb_dro_dual1}).
In particular, at each iteration $t$, using $\lambda_{t-1}$, we
can find $\pi_{\lambda_{t-1}}$ (a member of $\Pi^{\ast}({\lambda}_{t-1})$) and then $\lambda_{t}$. We assume we have access to
an oracle to solve (\ref{gd_thm_subproblem}). Developing efficient methods to
solve optimal transport problems as in (\ref{gd_thm_subproblem}) is a topic of
separate interest which we will not focus in this paper.
Once we arrive at the optimal solution $(\lambda^{*},\pi_{\lambda^{*}})$ (or a
reasonable approximation of the optimal solution), then an optimal mapping
between $y,x$ solving problem (\ref{pb_dro_dual1}) can be constructed as follows.

\begin{enumerate}
\item For each point $y$, map it to a new point $w$ using $\pi_{\lambda^{*}}$.

\item Find $x$ as the solution to the problem $\sup_{x}\left\{  -\widetilde{c}(x,y)-\lambda^{\ast}c(x,w)\right\}  $.
\end{enumerate}

We conclude this section with an example in which $G_{\delta}(\mu_{0},\nu)$
can be substantially simplified.

\paragraph{Example 1: Wasserstein Distances of Order 2}

Let $\widetilde{c}(x,y)=||x-y||^{2}$ and $c(x,w)=||x-w||^{2}$. Then,
\[
-G_{\delta}(\mu_{0},\nu)=\min_{\lambda\geq0}\left\{  \delta\lambda-\left(
\frac{\lambda}{1+\lambda}\right)  \cdot\left(  \min_{\pi\in\Pi_{W,Y}(\mu
_{0},\nu)}\mathbb{E}_{\pi}||W-Y||^{2}\right)  \right\}
\]

Let $H_{0}=\min_{\pi\in\Pi_{W,Y}(\mu_{0},\nu)}\mathbb{E}_{\pi}||W-Y||^{2}$.
Then the optimal $\lambda$ is $\lambda=\left(  \sqrt{\frac{H_{0}}{\delta}}-1\right)  ^{+}$.

\paragraph{Example 2: Wasserstein distance of order 1 and Wasserstein GANs
\label{sec:dro_wgans}}

Let $\mathcal{S}_{X}=\mathcal{S}_{Y}$ with metric $d_X = d_Y = d$. 
For this subsection, let $\widetilde{c}(x,y)=c(x,y)=d(x,y)$ for all
$x,y \in \mathcal{S}_X$.

First, we claim that 

\[
h(w,y,\lambda)=
\begin{cases}
-d(w,y),\quad\lambda>1\\
-\lambda d(w,y),\quad\lambda<1
\end{cases}
.
\]
This can be seen as follows. Let $x_{(w,y)} =\argmax_{x\in\mathcal{S}_X} \{ \ -d(x,y) - \lambda d(x,w) \}$. Then for $\lambda >1$,

\begin{align*}
&  -d(w,y) = -d(w,y) - \lambda\cdot d(w,w) \leq -d(x_{(w,y)},y) - \lambda\cdot d(x_{(w,y)},w) \\
\Leftrightarrow &  \lambda\cdot d(x_{(w,y)},w) \leq d(w,y) - d(x_{(w,y)},y) \leq d(x_{(w,y)},w)\\
\Leftrightarrow &  (\lambda  - 1) d(x_{(w,y)},w) \leq0 \Leftrightarrow x_{(w,y)} = w.
\end{align*}
A similar argument holds for $\lambda <1$.

In addition, since $h(w,y,\lambda)$ is the supremum of Lipschitz functions, it is continuous in $\lambda$.
So, for $\lambda=1$, $h(w,y,\lambda)=-d(w,y)$. 
For $\lambda\geq 1$, the minimum of $g(\lambda,\mu_0,\nu)$ occurs at $\lambda = 1$.
For $\lambda \in [0,1]$, if $G_{0}(\mu_{0},\nu)-\delta \leq 0$, the minimum of $g(\lambda,\mu_0,\nu)$ occurs at $\lambda = 0$; otherwise,
the minimum occurs at $\lambda = 1$. 
So,
\begin{equation}
G_{\delta}(\mu_{0},\nu)=\max\{G_{0}(\mu_{0},\nu)-\delta,0\}.\label{thresh_eq}
\end{equation}
where $G_{0}(\mu_{0},\nu)$ is the order-1 Wasserstein distance
between $\mu_{0},\nu$.

Expression (\ref{thresh_eq}) can be directly applied to training Wasserstein GANs.
For additional background on these types of generative networks, see
\cite{arjovsky2017wasserstein,gulrajani2017improved}. Wasserstein GANs involve
the optimization problem $\min_{\theta}G_{0}(\mu_{n},\mu_{\theta})$ where
$\mu_{n}$ is the empirical measure of a real dataset and $\mu
_{\theta}$ is a parametric probability measure to be constructed using a generative
network. 

Our \emph{OTR for Wasserstein GANs} takes the form
\begin{align}
\min_{\theta}G_{\delta}(\mu_{n},\mu_{\theta})  & =\min_{\theta}\max\{G_{0}(\mu_{n},\mu_{\theta})-\delta,0\} \nonumber  \\
& =\min_{\theta}\max\{\max_{f\in Lip\left(  1\right)  }E_{\mu_{n}}f\left(
X\right)  -E_{\mu_{\theta}}f\left(  X\right)  -\delta,0\} \nonumber  \\
& =\min_{\theta}\max_{f\in Lip\left(  1\right)  }\left(  E_{\mu_{n}}f\left(
X\right)  -E_{\mu_{\theta}}f\left(  X\right)  -\delta\right)  ^{+}  \label{eq:dr_wgan},
\end{align}
where $Lip\left(  1\right)  $ represent the space of 1-Lipschitz functions
with respect to the metric $d\left(  \cdot\right)  $. Note that $\delta=0$
recovers the problem for Wasserstein GANs. Our implementation involves just a
small modification of standard Wasserstein GAN platforms. However, it is
important to choose the regularization parameter $\delta$ carefully. The
next section provides statistical guidance to this effect. 

Solving (\ref{eq:dr_wgan}) requires only a simple augmentation to any
stochastic gradient descent procedure proposed for Wasserstein GANs. In
particular, (\ref{thresh_eq}) implies
\[
\nabla_{\theta}G_{\delta}(\mu_{r},\mu_{\theta})=\nabla_{\theta}G_{0}(\mu
_{r},\mu_{\theta})\mathbbm{1}\left(  G_{0}(\mu_{r},\mu_{\theta})\geq
\delta)\right)
\]
where $\mathbbm{1}(\cdot)$ is the indicator function. So, in a stochastic
gradient descent implementation, $\theta$ should be updated only when
$\mathbbm{1}\left(  G_{0}(\mu_{r},\mu_{\theta})\geq\delta)\right)  $ and the
procedure will be the same as for Wasserstein GANs. Experiment results are
provided in Section \ref{sec:wgan_experiments}.

\section{The Statistics of the OTR Problem\label{sec:drot_stats}}

In the previous section we studied the optimization problem $\min_{\lambda
\geq0}\,g(\lambda,\mu_{0},\nu)$ with respect to any distribution $\mu_{0}$. In
this section, we study statistical guarantees when $\mu_{0}$ is given by an
empirical measure $\mu_{n}$ of i.i.d. observations, so its canonical
representation takes the form
$\mu_{n}\left(  dx\right)  =n^{-1}\sum_{j=1}^{n}\delta_{X_{i}}\left(  dx\right)  ,$
with the $X_{i}$'s being i.i.d. copies of some distribution $\mu_{\infty}$. 
We derive a confidence interval for $G_{0}(\mu_{0},\nu)$ through the use of concentration inequalities.

In this section, we focus on the case where $\mathcal{S}_X = \mathcal{S}_Y$ and $d_X = d_Y$.
In addition, for all $x,y \in\mathcal{S}_{X}$, we set $c(x,y) = d^{k}(x,y)$ where
$k \geq1$.

Suppose $c,\widetilde{c}$ are Lipschitz functions with Lipschitz constants
$L(c)$ and $L(\widetilde{c})$ respectively. As a result, $h(w,y,\lambda)$ is
Lipschitz in $(w,y)$ with Lipschitz constant $K_{\lambda}=O(L(c)\lambda\vee L(\widetilde{c}))$.

Define
\begin{align*}
\epsilon(n,\rho,\zeta,K_{\lambda}) :=  &  \sqrt{\frac{\log(\frac{1}{\rho})}{2n} } + 4\zeta K_{\lambda} +\\
&  \frac{8\sqrt{2}K_{\lambda}}{\sqrt{n}}\int_{\zeta/4}^{4diam(\mathcal{S}_{X})} \sqrt{ \mathcal{N}(\mathcal{S}_{X},d_{X},\xi/4) \log\left(
2\left\lceil \frac{2diam(\mathcal{S}_{X})}{\xi} \right\rceil +1 \right)
}d{\xi}
\end{align*}
where $\mathcal{N}(\mathcal{S}_{X},d_{X},\xi)$ is the $\xi$-covering number
for ($\mathcal{S}_{X},d_X$).

\begin{thm}
For $k=1,d \geq2, \zeta>0, \delta\geq0, \lambda> L(\widetilde{c})$ with
probability at least $1-\rho$,
\begin{align*}
G_{0}(\mu_{\infty}, \nu) \leq G_{\delta}(\mu_{n}, \nu) + \epsilon
(n,\rho,\zeta,K_{\lambda}) + \lambda\delta
\end{align*}
Also for $k>1, d \geq2, \zeta>0, \delta>0, \lambda= \delta^{-\frac{k-1}{k}}$
with probability at least $1-\rho$,
\begin{align*}
G_{0}(\mu_{\infty}, \nu) \leq G_{\delta}(\mu_{n}, \nu) + \epsilon
(n,\rho,\zeta,K_{\lambda}) + \left(  2\cdot L^{\frac{k}{k-1}}(\widetilde{c}) + 1
\right)  \delta^{\frac{1}{k}}
\end{align*}

\label{thm:delta_sel}
\end{thm}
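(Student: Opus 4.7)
The plan is to combine the dual representation from Theorem~\ref{thm:thm1} with a uniform concentration bound for $K_\lambda$-Lipschitz functions. Define $\Psi_\lambda(\mu):=\max_{\pi\in\Pi_{W,Y}(\mu,\nu)}\mathbb{E}_\pi[h(W,Y,\lambda)]$, so that Theorem~\ref{thm:thm1} reads $G_\delta(\mu,\nu) = -\min_{\lambda\geq 0}\{\lambda\delta+\Psi_\lambda(\mu)\}$, and in particular $G_\delta(\mu_n,\nu) \geq -\lambda\delta-\Psi_\lambda(\mu_n)$ for any fixed admissible $\lambda$. The first task is to relate $-\Psi_\lambda(\mu_\infty)$ back to $G_0(\mu_\infty,\nu)$. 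For $k=1$ and $\lambda>L(\widetilde c)$, the Lipschitz inequality on $\widetilde c$ forces the supremum defining $h$ to be attained at $x=w$, so $h(w,y,\lambda)=-\widetilde c(w,y)$ and hence $\Psi_\lambda(\mu_\infty)=-G_0(\mu_\infty,\nu)$ exactly. For $k>1$, the analogous argument, combined with maximising $L(\widetilde c)t-\lambda t^k$ over $t=d(x,w)\geq 0$, yields
\begin{equation*}
-\widetilde c(w,y)\leq h(w,y,\lambda)\leq -\widetilde c(w,y)+C_k\,L(\widetilde c)^{k/(k-1)}\lambda^{-1/(k-1)},
\end{equation*}
and the choice $\lambda=\delta^{-(k-1)/k}$ balances $\lambda\delta$ and $\lambda^{-1/(k-1)}$ to both equal $\delta^{1/k}$; tracking $C_k$ and the $\lambda\delta$ contribution is what recovers the stated coefficient $2L(\widetilde c)^{k/(k-1)}+1$.

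Putting these pieces together gives the deterministic reduction
\begin{equation*}
G_0(\mu_\infty,\nu) - G_\delta(\mu_n,\nu) \leq \lambda\delta + \bigl[\Psi_\lambda(\mu_n)-\Psi_\lambda(\mu_\infty)\bigr] + E_k(\delta),
\end{equation*}
with $E_k(\delta)=0$ if $k=1$ and $E_k(\delta)=O(\delta^{1/k})$ if $k>1$. It remains to bound the stochastic term $\Psi_\lambda(\mu_n)-\Psi_\lambda(\mu_\infty)$ with probability at least $1-\rho$. Applying Kantorovich duality to the inner OT problem defining $\Psi_\lambda$, the optimal dual potential $\alpha(w)=\sup_y[h(w,y,\lambda)-\beta(y)]$ inherits the $K_\lambda$-Lipschitz property from $h(\cdot,y,\lambda)$, and after centering at a reference point (which preserves the dual objective) $|\alpha|$ is bounded by $K_\lambda\,\mathrm{diam}(\mathcal{S}_X)$, so
\begin{equation*}
\Psi_\lambda(\mu_n)-\Psi_\lambda(\mu_\infty) \leq \sup_{\alpha\in\mathrm{Lip}(K_\lambda)}\bigl(\mathbb{E}_{\mu_n}\alpha-\mathbb{E}_{\mu_\infty}\alpha\bigr).
\end{equation*}

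I would then control this empirical process with two standard ingredients: a McDiarmid bounded-differences argument that concentrates the supremum around its expectation, contributing the $\sqrt{\log(1/\rho)/(2n)}$ term, plus Dudley chaining of the expected supremum over a $\zeta$-cover of $(\mathcal{S}_X,d_X)$. The $4\zeta K_\lambda$ piece absorbs the uniform error of approximating a $K_\lambda$-Lipschitz $\alpha$ by its restriction to the cover, while the integral term comes from a standard entropy bound of the form $\log\mathcal{N}(\mathrm{Lip}(K_\lambda),\|\cdot\|_\infty,\xi)\lesssim \mathcal{N}(\mathcal{S}_X,d_X,\xi/4)\log\lceil 2\,\mathrm{diam}(\mathcal{S}_X)/\xi\rceil$ plugged into Dudley's integral and truncated between $\zeta/4$ and $4\,\mathrm{diam}(\mathcal{S}_X)$.

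The main obstacle is the final chaining step: matching the precise constants in $\epsilon(n,\rho,\zeta,K_\lambda)$ requires a delicate covering-number argument for the Lipschitz function class together with tight bookkeeping of truncation errors and of the factor $K_\lambda$ propagating through the cover. The $k>1$ approximation is a straightforward calculus exercise but still demands care to pin down the coefficient $2L(\widetilde c)^{k/(k-1)}+1$ in the statement.
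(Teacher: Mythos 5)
Your proposal is correct and follows essentially the same route as the paper: the same decomposition of $G_0(\mu_\infty,\nu)-G_\delta(\mu_n,\nu)$ into a stochastic term $\Psi_\lambda(\mu_n)-\Psi_\lambda(\mu_\infty)$ controlled via $K_\lambda$-Lipschitz dual potentials (McDiarmid plus an entropy-integral bound, which the paper imports from Theorem~18 of \cite{luxburg2004distance} rather than re-deriving by chaining) and a deterministic approximation term bounded by the same calculus estimate on $h(w,y,\lambda)+\widetilde{c}(w,y)$ (the paper's Lemma~\ref{approxm_lip}), with the identical choice $\lambda=\delta^{-(k-1)/k}$ for $k>1$. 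Your direct optimization of $L(\widetilde c)t-\lambda t^k$ in fact yields a constant no worse than the paper's $2L(\widetilde c)^{k/(k-1)}$, so the stated bound follows.
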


\begin{remark}
Theorem \ref{thm:delta_sel} also holds when $\mu_n$ and $\mu_{\infty}$ are switched.
As a result, we obtain a confidence interval for $G_{0}(\mu_{\infty}, \nu)$.  
In particular, with probability at least $1 - 2\rho$, $G_{0}(\mu_{\infty}, \nu)$
resides in an interval centered at $G_{\delta}(\mu_{n}, \nu)$ with radius
$\epsilon(n,\rho,\zeta,K_{\lambda}) + \lambda\delta$ for $k=1$ and radius 
$\epsilon(n,\rho,\zeta,K_{\lambda}) + \left(  2\cdot L^{\frac{k}{k-1}}(\widetilde{c}) + 1\right)  \delta^{\frac{1}{k}}$
for $k>1$.

\end{remark}

\begin{remark}
 By optimizing the upper bound in Theorem \ref{thm:delta_sel}, we were able to recover the term $\frac{1}{n^{1/d}}$
 (curse of dimensionality) \cite{dudley1969speed}. For more details, see the appendix for Theorem \ref{thm:delta_sel}.
\end{remark}

\section{Experiments}

\subsection{OTR Wasserstein GAN}
\label{sec:wgan_experiments}

This section provides experiment results evaluating OTR Wasserstein GANs
(WGANs) in Section \ref{sec:tractability}.

We trained on two dataset: MNIST \cite{lecun1998mnist} and CIFAR10
\cite{krizhevsky2009learning}. For the GAN implementation, we used the code
provided by \cite{gulrajani2017improved} and for Frechet Inception Distance (FID) calculation we used the
code provided by \cite{heusel2017gans}. For every fixed initial weights (seed),
we trained our proposed GAN with different values of $\delta$. We performed training 
for 20000 generator iterations.
For CIFAR10,
$\delta \in \{0,1.9,2.0,2.1 \}$. For MNIST, $\delta \in \{0,0.2,0.3,0.4 \}$.

Representative results are provided in Figures \ref{fig::fig1},\ref{fig::fig2} in log-log scale. 
Additional results that use different initializations are provided in the Appendix.
Our experiments indicate that with an `appropriate' choice of $\delta$, OTR WGAN
has a similar test loss performance to WGAN. Also on the CIFAR10 dataset, they
have similar Inception Score \cite{salimans2016improved} performance.
Moreover, OTR WGAN has either the same or faster FID\cite{heusel2017gans}
convergence rate than WGAN. In addition, OTR WGAN trains faster than WGAN
because it skips training the Generator when the threshold criteria is not
met. The `appropriate' values for $\delta$ were found using cross validation.
This `appropriate' value for $\delta$ should be slightly greater than
$\min_{\theta}\mathcal{W}(\mu_{n},\mu_{\theta})$. For values of $\delta$
considerably larger than $\min_{\theta}\mathcal{W}(\mu_{n},\mu_{\theta})$,
training of OTR WGAN is faster; however, the FID performance of OTR WGAN is
worse than WGAN. On the other hand for values of $\delta$ considerably less
than $\min_{\theta}\mathcal{W}(\mu_{n},\mu_{\theta})$, the thresholding
becomes ineffective and OTR WGAN behaves similar to WGAN. In addition, trying
many different initial points (seeds) indicate that OTR WGAN is more stable
and has less volatility compared to WGAN.

\begin{figure}[ptbh]
\centering
\begin{subfigure}[b]{0.45\textwidth}
\includegraphics[width=\textwidth]{./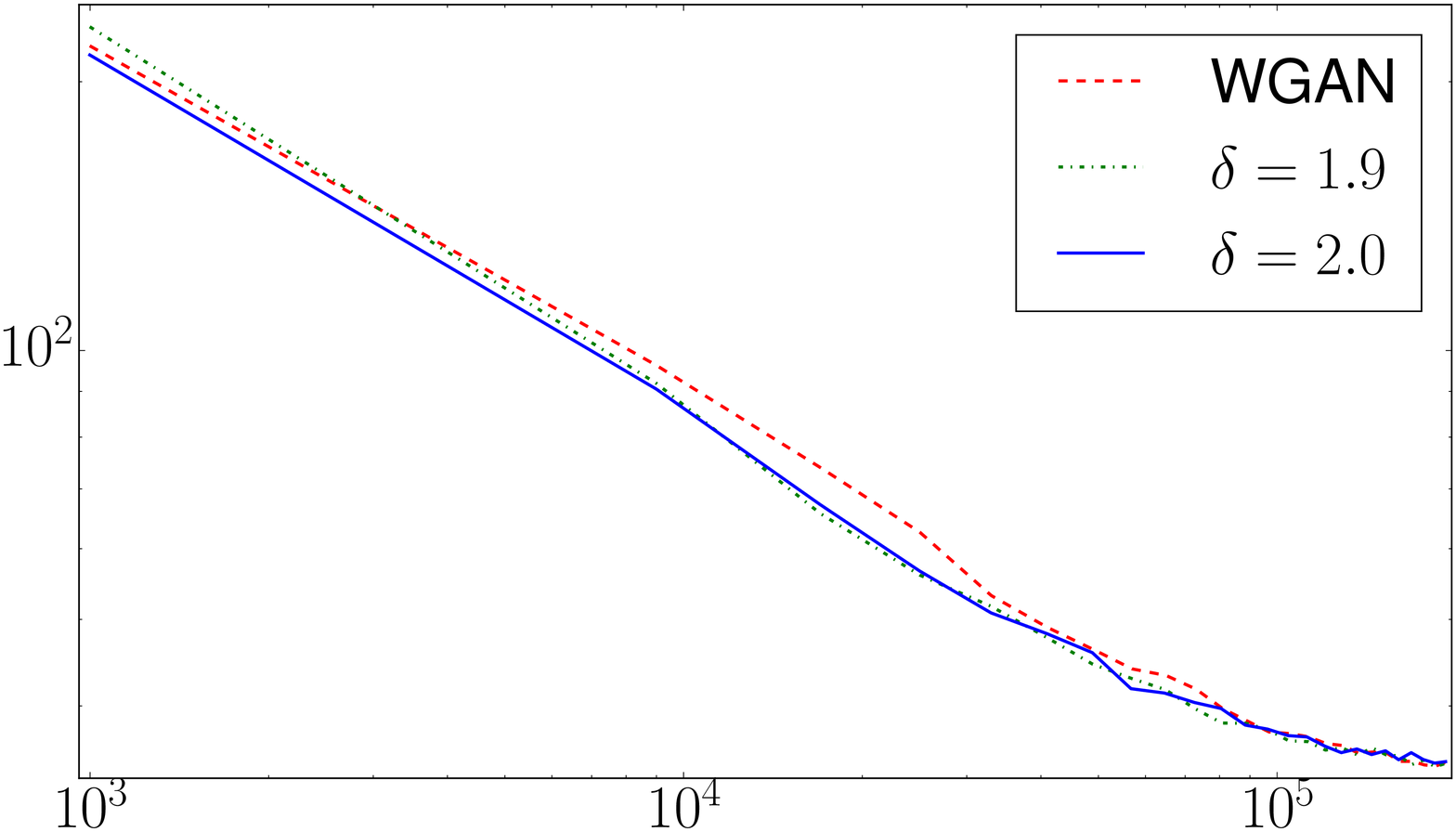}
\caption{}
\end{subfigure}
~ \begin{subfigure}[b]{0.45\textwidth}
\includegraphics[width=\textwidth]{./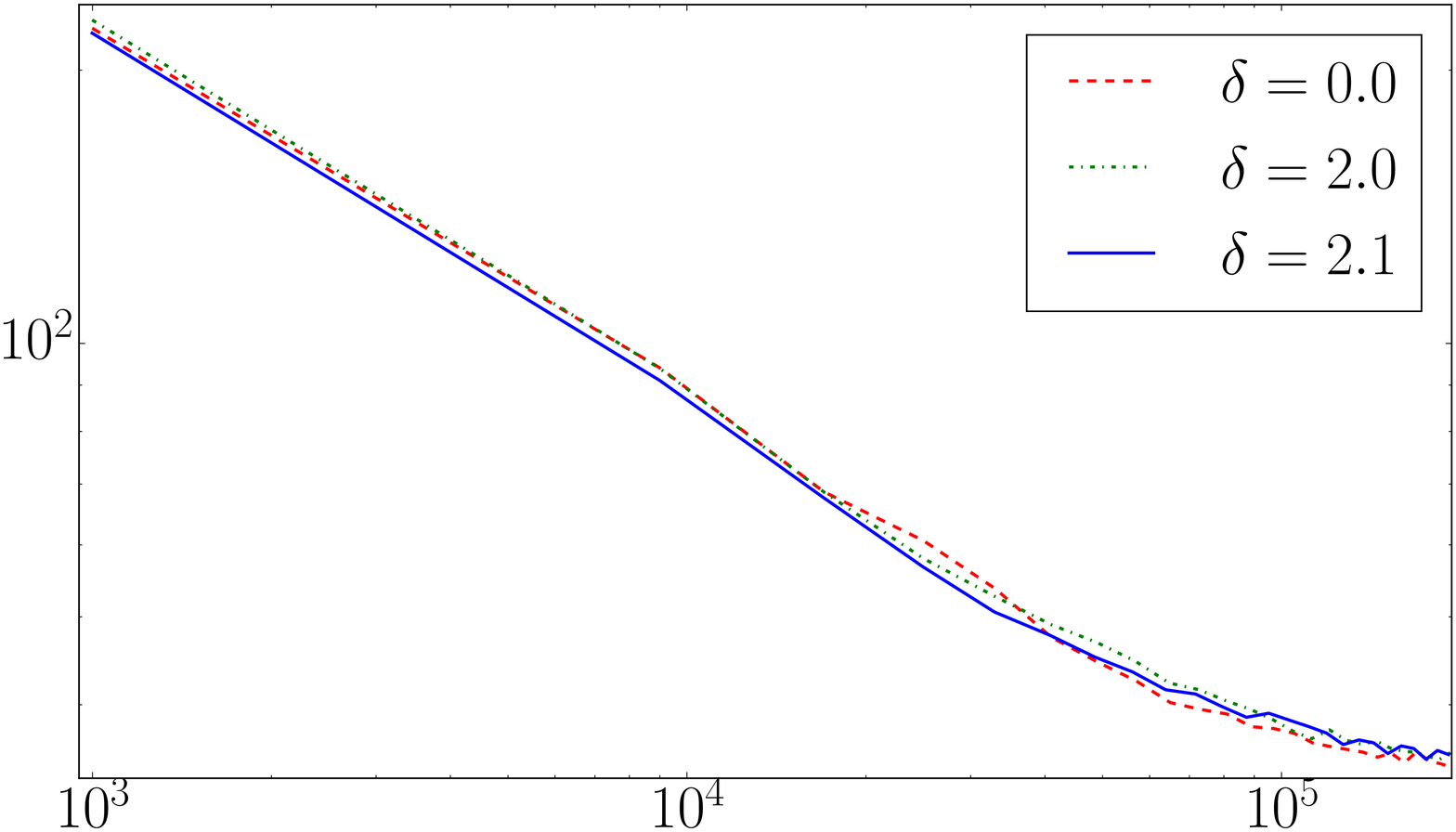}
\caption{}
\end{subfigure}
\caption{FID versus generator iteration on CIFAR10 for comparison of OTR WGAN
and WGAN. Each subfigure presents a different initialization. In the legend,
$\delta$ is the OTR WGAN parameter.}
\label{fig::fig1}
\end{figure}

\begin{figure}[ptbh]
\centering
\begin{subfigure}[b]{0.45\textwidth}
\includegraphics[width=\textwidth]{./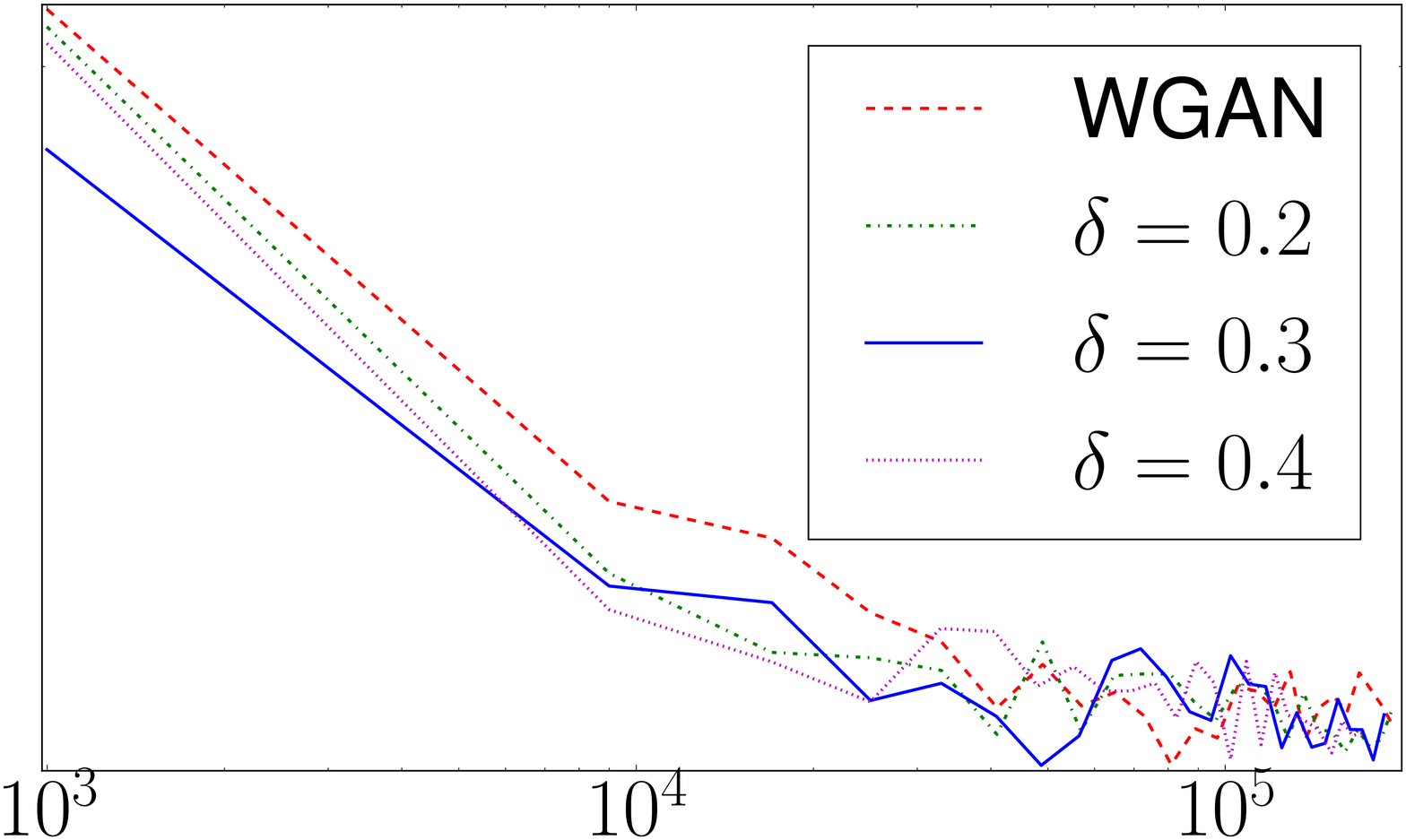}
\caption{}
\end{subfigure}
~ \begin{subfigure}[b]{0.45\textwidth}
\includegraphics[width=\textwidth]{./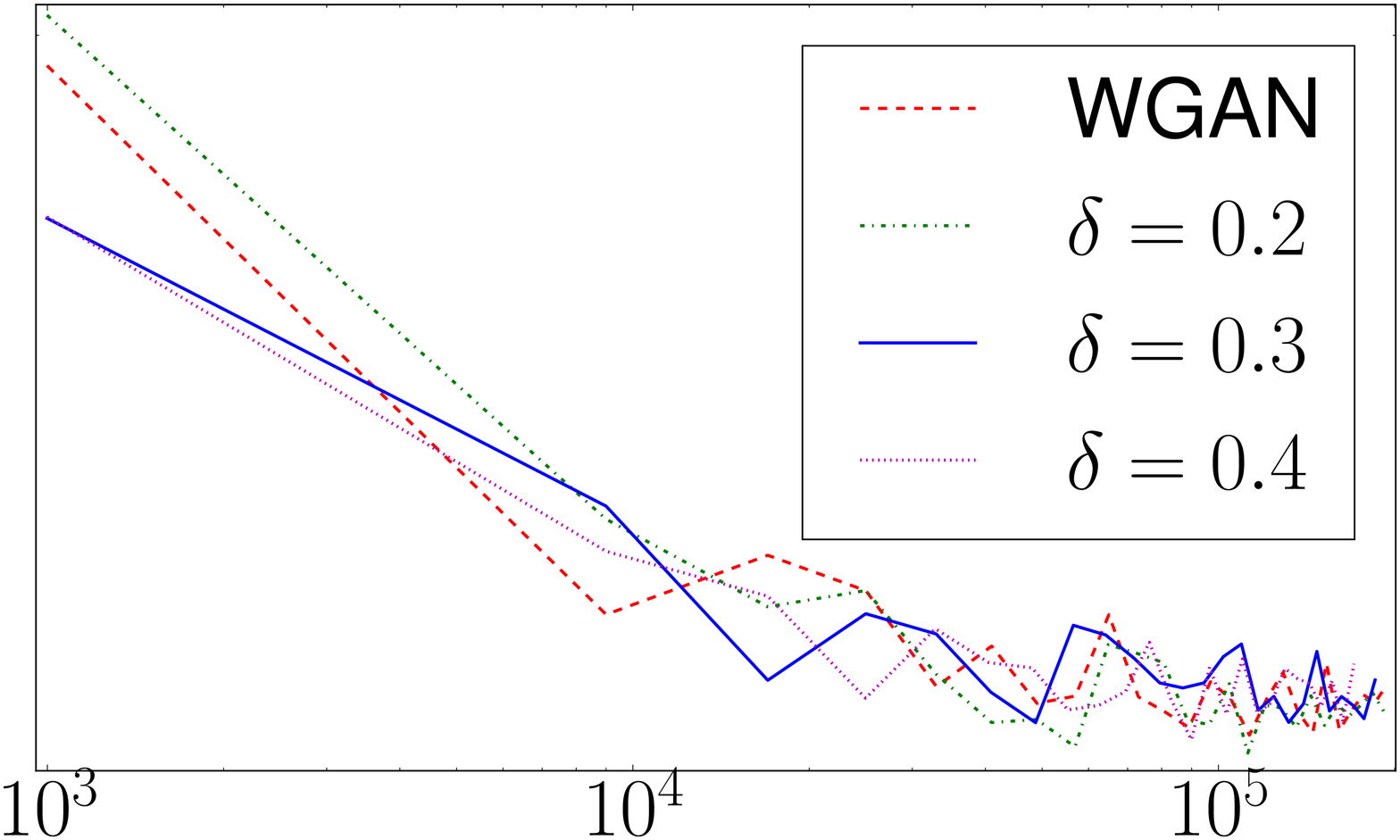}
\caption{}
\end{subfigure}
\caption{FID versus generator iteration on MNIST for comparison of OTR WGAN
and WGAN. Each subfigure presents a different initialization. In the legend,
$\delta$ is the OTR WGAN parameter.}
\label{fig::fig2}
\end{figure}

\subsection{Estimating the Optimal Transport Cost}

In this section, we present simulation results denoting the value of
optimal transport relaxation for estimating the Wasserstein distance
between measures.

Let $\mu_{0}, \nu$ be two probability measures defined on $\mathbb{R}^{20}$.
The measure $\nu$ is constructed from $300$ i.i.d samples of $\mathcal{N}
\left(  0,I_{20 \times20} \right)  $ where $I_{20 \times20}$ is the identity
matrix. The measure $\mu_{0}$ is also constructed from $300$ i.i.d. sampling
of a random vector $X \in\mathbb{R}^{20}$ defined as follows. For each
component $X_{i}$ of $X$ ($1 \leq i \leq20$), $X_{i} := \rho R_{i} + (1 -
\rho^{2})^{\frac{1}{2}} T$ where $\{R_{i}\}_{i=1}^{20},T$ are i.i.d. and
$\mathcal{N} \left(  0,1 \right)  $. In particular, $0 \leq\rho\leq1$
specifies dependence of the $X_{i}$'s.

For $n\in\mathbb{N}$, let $\hat{\mu}_{n}$ be an empirical probability measure
constructed from $n$ i.i.d. samples from $\mu_{0}$. For $\widetilde{c} = c =
\Vert\cdot\Vert^{2}_{2}$, we compute the values of $G_{\delta_{n}}(\hat{\mu
}_{n}, \nu), G_{0}(\hat{\mu}_{n}, \nu)$ where $\delta_{n} = \frac{1}{n^{0.45}}$. 
Then we compare them with the value of $G_{0}({\mu}_{0}, \nu)$.
To solve the optimal transport problems, the implementation from the Python
Optimal Transport Library \cite{flamary2017pot} was used.

Our experiments indicate for large enough values of $n$, the empirical cost
functions $G_{0}(\hat{\mu}_{n}, \nu), G_{\delta_{n}}(\hat{\mu}_{n}, \nu)$
often incur upward shifts relative to $G_{0}({\mu}_{0}, \nu)$. This is
illustrated in \ref{fig::fig3}. Figure \ref{fig:low_rho} and Figure
\ref{fig:high_rho} correspond to high and low dependence of the $X_{i}$'s, respectively.

\begin{figure}[ptbh]
\centering
\begin{subfigure}[b]{0.48\textwidth}
\includegraphics[width=\textwidth]{./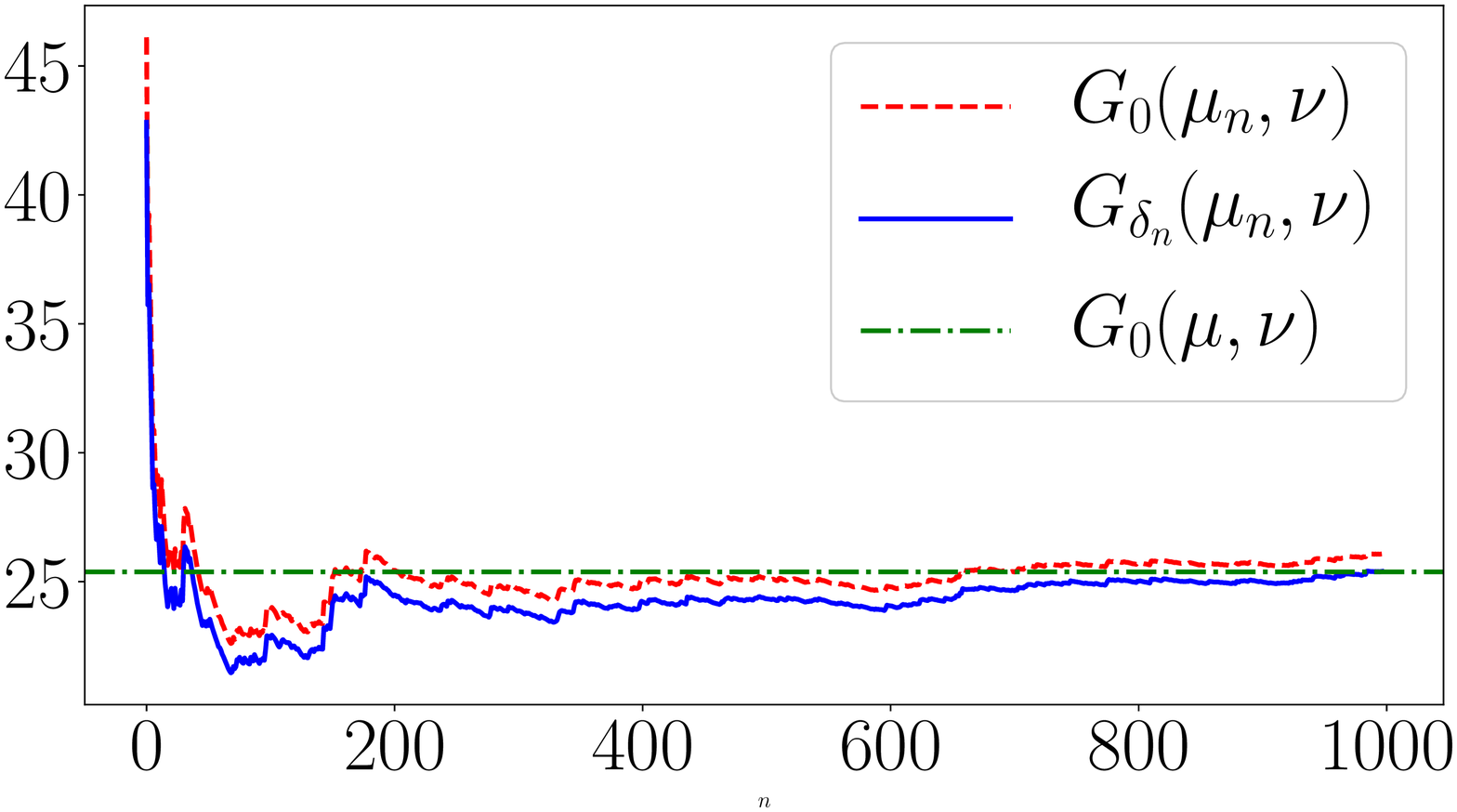}
\caption{$\rho=0.3$}
\label{fig:low_rho}
\end{subfigure}
~ \begin{subfigure}[b]{0.48\textwidth}
\includegraphics[width=\textwidth]{./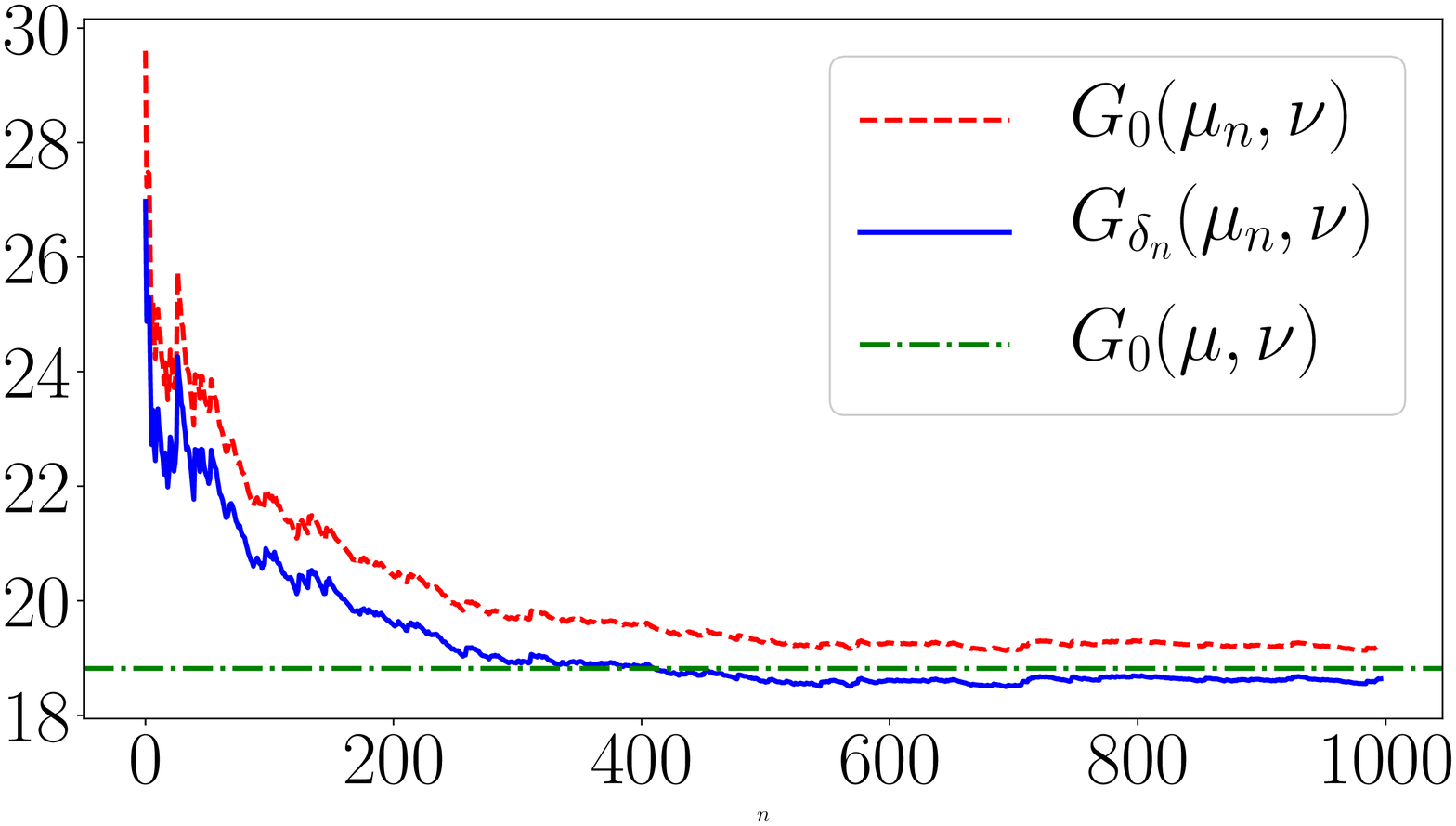}
\caption{$\rho=0.8$}
\label{fig:high_rho}
\end{subfigure}
\caption{Estimating the Optimal Transport Cost}
\label{fig::fig3}
\end{figure}

\bibliographystyle{plain}
\bibliography{Papers}

\clearpage\onecolumn\appendix

\section{Proof of Theorem \ref{thm:duality}}

\label{apx_prf_thm_duality}

The result follows directly from Sion's min-max theorem (see
\cite{sion1958general}). First, the set $\mathcal{D}_{\delta}\left(  \mu
_{0}\right)  $ is convex because, by duality, $D_{c}\left(
\mu_{0},\cdot\right)  $ is convex (because since it is the supremum of linear
functionals). Next, since the spaces involved are compact, the set
$\mathcal{D}_{\delta}\left(  \mu_{0}\right)  $ is compact in the weak
convergence topology, by Prohorov's theorem. Furthermore, it is immediate that
the set $\mathcal{A}\left(  \widetilde{c}\right)  $ is convex. Finally, the
objective function is bilinear both in $\left(  \mu,v\right)  $, on one hand,
and $\left(  \alpha,\beta\right)  $ on the other. By definition of weak
convergence, the functional is continuous in the weak convergence topology
since the elements in $\mathcal{A}\left(  \widetilde{c}\right)  $ are both
continuous, and bounded  and the spaces are compact.

\section{Proof of Theorem \ref{thm:thm1}}

\label{apx_prf_thm1}

We have
\[
G_{\delta}(\mu_{0},\nu)=\min_{D_{c}(\mu,\mu_{0})\leq\delta}\quad\min_{\pi
\in\Pi_{X,Y}(\mu,\nu)}\quad\mathbb{E}_{\pi}\widetilde{c}(X,Y).
\]
Given a coupling $\pi\in\Pi_{X,Y}(\mu,\nu)$ we can always have a
coupling between $X$ and $W\sim\mu_{0}$ (by the gluing lemma, see \cite{villani2003topics}).
Therefore, we have that
\begin{align*}
G_{\delta}(\mu_{0},\nu)=\min_{\pi}\quad &  \int\widetilde{c}(x,y)\pi
(dx,dy,dw)\\
\text{s.t.}\quad &  \int c(x,w)\pi(dx,dw)\leq\delta\\
&  \pi_{W}=\mu_{0},\pi_{Y}=\nu
\end{align*}
which is the same as
\begin{align*}
-G_{\delta}(\mu_{0},\nu)=\max_{\pi}\quad &  \int-\widetilde{c}(x,y)\pi
(dx,dy,dw)\\
\text{s.t.}\quad &  \int c(x,w)\pi(dx,dw)\leq\delta\\
&  \pi_{W}=\mu_{0},\pi_{Y}=\nu
\end{align*}
As a result,
\begin{align*}
& -G_{\delta}(\mu_{0},\nu) =\\   &\max_{\pi}\min_{\lambda\geq0,h_{1}\in C\left(
\mathcal{S}_{X}\right)  ,h_{2}\in C\left(  \mathcal{S}_{Y}\right)  }\left[
\int-\widetilde{c}(x,y)\pi(dx,dy,dw)+\int h_{1}(w)\mu_{0}(dw) - \right.   \\ 
&  \left. \int h_{1}(w)\pi(dx,dy,dw)  +\int h_{2}(y)\nu(dy)-\int h_{2}(y)\pi(dx,dy,dw) + \right. \\
& \left. \lambda\left(\delta-\int c(x,w)\pi(dx,dw)\right)  \right]  ,
\end{align*}
Further, Sion's min-max Theorem \cite{sion1958general} is applicable because the value function is both linear in $\pi$ and $(h_1, h_2)$. 
In particular, it is concave in $\pi$ and convex in $(h_1, h_2)$. We then need to argue upper semi-continuity as function of $\pi$ 
and lower semicontinuity as a function of $(h_1,h_2)$. We choose the topology of uniform convergence over the compact sets 
$\mathcal{S}_X$ and $\mathcal{S}_Y$. Continuity then follows easily by Dominated Convergence. 
Now, to show upper semicontinuity as a function of $\pi$, 
we consider the space of probabilities under the weak convergence topology. It suffices to show that $\int\widetilde{c}(x,y)\pi(dx,dy,dw)$ and 
$\int c(x,w)\pi(dx,dw)$ are lower semicontinuous as a function of $\pi$, since the remaining terms involving $\pi$ involve integrals 
of continuous functions over compact sets (hence continuous and bounded functions) and therefore those remaining terms are directly 
seen to be continuous by the definition of weak convergence (denoted by $\Rightarrow$). 
We need to show that if $\pi_n \Rightarrow \pi$ as $n \to \infty$, then $\liminf \int c d\pi_n  \geq \int cd\pi$. 
By the Skorokhod representation, we may assume that there exists $Z_n = (X_n, Y_n, W_n)$ such that $Z_n$ has distribution $\pi_n$ 
and $Z$ having distribution $\pi$, such that $Z_n \rightarrow Z$ almost surely as $n \to \infty$. 
Then, we have that

\begin{align*}
\liminf \int cd\pi_n = \liminf E(c(Z_n)) \geq \int E(\liminf c(Z_n)) \geq E( c(Z)) = \int c\;d\pi,
\end{align*}

where the first inequality follows by Fatou's lemma and the second inequality follows because $c$ is lower semicontinuous. 
A similar argument holds for $\widetilde{c}$.
As a result,
\begin{align*}
& -G_{\delta}(\mu_{0},\nu) = \\ &  \min_{\lambda\geq0,h_{1}\in C\left(
\mathcal{S}_{X}\right)  ,h_{2}\in C\left(  \mathcal{S}_{Y}\right)  }\max_{\pi
}\left[  \int\left(  -\widetilde{c}(x,y)-h_{1}(w)-h_{2}(y)-\lambda
c(x,w)\right)  \pi(dx,dy,dw)\right.  \\
&  \left.  +\lambda\delta+\int h_{1}(w)\mu_{0}(dw)+\int h_{2}(y)\nu(dy)\right]  .
\end{align*}
The above expression implies that for all $x,y,w$ we must have:
\[
-\widetilde{c}(x,y)-h_{1}(w)-h_{2}(y)-\lambda c(x,w)\leq0\Rightarrow\sup
_{x}[-\widetilde{c}(x,y)-\lambda c(x,w)]\leq h_{1}(w)+h_{2}(y)
\]
Therefore,
\begin{align*}
-G_{\delta}(\mu_{0},\nu) &  =\min_{\lambda\geq0}\max_{\pi\in\Pi_{W,Y}(\mu
_{0},\nu)}\left\{  \lambda\delta+\mathbb{E}_{\mu_{0}}h_{1}(W)+\mathbb{E}_{\nu
}h_{2}(Y)\right\}  \\
&  =\min_{\lambda\geq0}\max_{\pi\in\Pi_{W,Y}(\mu_{0},\nu)}\left\{
\lambda\delta+\mathbb{E}_{\pi}\left[  \sup_{x}\{-\widetilde{c}(x,Y)-\lambda
c(x,W)\}\right]  \right\}  \\
&  =\min_{\lambda\geq0}\left\{  \lambda\delta+\max_{\pi\in\Pi_{W,Y}(\mu
_{0},\nu)}\quad\mathbb{E}_{\pi}\left[  \sup_{x}\left\{  -\widetilde{c}(x,Y)-\lambda c(x,W)\right\}  \right]  \right\}
\end{align*}

\section{Proof of Theorem \ref{thm_gd}}

\label{apx_prf_gd_thm}

\noindent
The differentiability of $h(\cdot)$ in $\lambda$ is an immediate result of Corollary 4 of \cite{milgrom2002envelope}.

Define
\begin{align*}
u(\lambda, \mu_{0}, \nu) = \max_{\pi\in\Pi_{W,Y}(\mu_{0},\nu)} \quad &
\mathbb{E}_{\pi} \left[  h(W,Y,\lambda) \right]
\end{align*}
Therefore, $g(\lambda, \mu_{0}, \nu) = \delta\lambda+ u(\lambda, \mu_{0},
\nu)$. In addition, define $f(\pi,\lambda) = \mathbb{E}_{\pi} \left[
h(W,Y,\lambda) \right]  $.\newline

\noindent We need to show $\frac{\partial}{\partial\lambda} u(\lambda, \mu
_{0}, \nu) = \mathbb{E}_{\pi_{\lambda}^{*}} \left[  \frac{\partial}{\partial\lambda} h(W,Y,\lambda) \right]  $. 
For this statement to hold,
according to Corollary 4 of \cite{milgrom2002envelope}, a sufficient condition is as follows.
The set $\Pi(\mu_{0},\nu)$ needs to be compact, $f(\pi,\lambda)$ needs to be
continuous in $\pi$, and $\frac{\partial}{\partial\lambda} f(\pi,\lambda)$
needs to be continuous in $(\pi,\lambda)$. In the remainder of this proof, we
will show that this sufficient condition holds.

The set $\mathcal{S}_{X} \times\mathcal{S}_{Y}$ is compact. Therefore,
Prohorov theorem implies under the weak convergence topology, $\Pi(\mu_{0},\nu)$
is a compact set.

On the other hand, the function $-\widetilde{c}(x,y) - \lambda c(x,w)$ is
continuous in $(x,w,y,\lambda)$ and the supremum 
$\sup_{x} \left\{  -\widetilde{c}(x,y) - \lambda c(x,w) \right\}  $ 
is attained due to the compactness of
$\mathcal{S}_{X},\mathcal{S}_{Y}$. Define $x^{*}(w,y,\lambda)$ to be the
maximizing $x$, which will be unique (because  $\widetilde{c}(\cdot,y)+ \lambda c(\cdot,w)$ is strictly convex).
Hence,
\begin{align*}
\sup_{x} \left\{  -\widetilde{c}(x,y) - \lambda c(x,w) \right\}  = -\widetilde
{c}(x^{*}(w,y,\lambda),y) - \lambda c(x^{*}(w,y,\lambda),w)
\end{align*}
Then, Berge's maximum theorem \cite{aliprantis2006infinite} implies
$h(w,y,\lambda)$ is continuous in $(w,y,\lambda)$ and $x^{*}(w,y,\lambda)$ is
upper hemicontinuous in $(w,y,\lambda)$. Moreover, since $x^{*}$ is a single
valued correspondence, it is continuous in $(w,y,\lambda)$.

Since it is defined on a compact set and continuous, $h(\cdot,\cdot,\lambda)$ is bounded. 
Also, $f(\cdot,\lambda)$ is linear in $\pi$. Therefore, under the weak
convergence topology, $f(\pi,\lambda)$ is continuous in $\pi$.

Moreover,
\begin{align}
&  \frac{\partial}{\partial\lambda} f(\pi,\lambda)\nonumber\\
\overset{(a)}{=} \,  &  \mathbb{E}_{\pi} \left[  \frac{\partial}{\partial\lambda} h(W,Y,\lambda) \right] \nonumber\\
= \,  &  \mathbb{E}_{\pi} \left[  -c(x^{*}(W,Y,\lambda),W)
\right]  \label{statement_dev_cont}
\end{align}

In the above statement, $(a)$ is an immediate result of the fact that
$h(\cdot)$ is convex in $\lambda$ and the monotone convergence theorem
together with the fact that $h(\cdot)$ is differentiable in $\lambda$. 
Since $c,x^{*}$ are continuous, the function $-c(x^{*}(W,Y,\lambda),W)$ 
is continuous in $(W,Y,\lambda)$. For fixed $\lambda$, this
function is bounded since it is defined on a compact set. Thus the bounded
convergence theorem implies (\ref{statement_dev_cont}) is continuous in
$\lambda$. In addition, $\frac{\partial}{\partial\lambda} f(\pi,\lambda)$ is
continuous in $\pi$ under the weak convergence topology. So, $\frac{\partial
}{\partial\lambda} f(\pi,\lambda)$ is continuous in $(\pi,\lambda)$.

\section{Proof of Theorem \ref{thm:delta_sel} and Additional Comments}
\label{apx_prf_thm_delta_sel}

\subsection{Proof of Theorem \ref{thm:delta_sel}}

It can be shown \cite{villani2003topics} that
\[
f(X_{1},\cdots,X_{n}):=\min_{\pi\in\Pi_{W,Y}(\mu_{n},\nu)}\mathbb{E}_{\pi
}\left\{  -h(W,Y,\lambda)\right\}  =\sup_{\alpha(\cdot)\in Lip(K_{\lambda})}\{\mathbb{E}_{\mu_{n}}\alpha(W)+\mathbb{E}_{\nu}\alpha_{\lambda}^{h}(Y)\}
\]
where $X_{1},\cdots,X_{n}$ are the i.i.d samples associated with the empirical
measure $\mu_{n}$. $Lip(K_{\lambda})$ denotes the set of all $K_{\lambda}$-Lipschitz functions $f(\cdot)$ defined on $\mathcal{S}_{X}$ such that
$\min_{x\in\mathcal{S}_{X}}|f(x)|=0$. In addition, $\alpha_{\lambda}^{h}(y):=\sup_{w}\{-h(w,y,\lambda)-\alpha(w)\}$.

\begin{prop}
For all $t>0$,
\begin{align*}
\mathbb{P}\left(  f(X_{1},\cdots,X_{n}) - \mathbb{E}f(X_{1},\cdots,X_{n}) \geq
t\right)  \leq\exp\left(  \frac{-2nt^{2}}{K^{2}_{\lambda}.{diam}^{2}(\mathcal{S}_{X})} \right)
\end{align*}
\label{prop:prop1}
\end{prop}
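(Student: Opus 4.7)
The plan is to apply McDiarmid's bounded differences inequality to the function $f(X_1,\ldots,X_n)$, exploiting the Kantorovich–Rubinstein dual representation stated just above the proposition. McDiarmid's inequality gives the desired Gaussian-type tail bound once we verify that swapping any single coordinate $X_i$ for an independent copy $X_i'$ changes $f$ by at most $c_i := K_\lambda \cdot \mathrm{diam}(\mathcal{S}_X)/n$; the conclusion then follows from $\sum_{i=1}^n c_i^2 = K_\lambda^2 \, \mathrm{diam}^2(\mathcal{S}_X)/n$, yielding precisely the stated exponent $-2nt^2 / (K_\lambda^2 \, \mathrm{diam}^2(\mathcal{S}_X))$.

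First I would write
\[
f(X_1,\ldots,X_n) \;=\; \sup_{\alpha \in Lip(K_\lambda)} \Bigl\{ \tfrac{1}{n}\sum_{j=1}^n \alpha(X_j) + \mathbb{E}_\nu \alpha_\lambda^h(Y) \Bigr\},
\]
so that only the first term depends on the samples, and the second term is a constant across perturbations of the data. Letting $f_i'$ denote the value of $f$ after replacing $X_i$ by $X_i'$, and using the elementary inequality $|\sup_\alpha A(\alpha) - \sup_\alpha B(\alpha)| \leq \sup_\alpha |A(\alpha) - B(\alpha)|$, I get
\[
|f(X_1,\ldots,X_n) - f_i'| \;\leq\; \sup_{\alpha \in Lip(K_\lambda)} \Bigl| \tfrac{1}{n} \bigl( \alpha(X_i) - \alpha(X_i') \bigr) \Bigr| \;\leq\; \tfrac{K_\lambda}{n} \, d_X(X_i, X_i') \;\leq\; \tfrac{K_\lambda \, \mathrm{diam}(\mathcal{S}_X)}{n},
\]
where the second inequality uses the $K_\lambda$-Lipschitz property of the admissible $\alpha$'s and the third uses that $\mathcal{S}_X$ has finite diameter (compactness, assumed earlier in the paper).

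With the bounded-differences constants $c_i = K_\lambda \mathrm{diam}(\mathcal{S}_X)/n$ in hand, McDiarmid's inequality (applicable since the $X_i$ are i.i.d.) yields
\[
\mathbb{P}\bigl( f - \mathbb{E} f \geq t \bigr) \;\leq\; \exp\!\Bigl( \tfrac{-2 t^2}{\sum_{i=1}^n c_i^2} \Bigr) \;=\; \exp\!\Bigl( \tfrac{-2 n t^2}{K_\lambda^2 \, \mathrm{diam}^2(\mathcal{S}_X)} \Bigr),
\]
which is exactly the claim. The only real work is the bounded-differences verification; I anticipate no genuine obstacle since the Kantorovich–Rubinstein dual makes $f$ a supremum of averages of uniformly Lipschitz functions, and the compactness of $\mathcal{S}_X$ (standing assumption in Section 2) supplies the finite diameter. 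One minor technicality worth checking is that the supremum over $Lip(K_\lambda)$ is attained or at least well-approximated, but since we only use the inequality $|\sup A - \sup B| \leq \sup|A-B|$, attainment is not required.
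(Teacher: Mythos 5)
Your proof is correct and follows essentially the same route as the paper: both apply McDiarmid's inequality after verifying the bounded-differences condition via the dual (Kantorovich--Rubinstein) representation, the inequality $|\sup_\alpha A(\alpha) - \sup_\alpha B(\alpha)| \leq \sup_\alpha |A(\alpha)-B(\alpha)|$, the $K_\lambda$-Lipschitz property of the admissible $\alpha$'s, and the finite diameter of the compact space $\mathcal{S}_X$. Your observation that the $\mathbb{E}_\nu \alpha_\lambda^h(Y)$ term cancels for each fixed $\alpha$ because it does not depend on the data is exactly the cancellation used in the paper's computation.
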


\begin{proof}
See Appendix \ref{apx_prf_prop1}.
\end{proof}

\begin{prop}
With probability at least $1-\rho$,
\[
\min_{\pi\in\Pi_{W,Y}(\mu_{n},\nu)}\mathbb{E}_{\pi}\left\{  -h(W,Y,\lambda
)\right\}  -\min_{\pi\in\Pi_{W,Y}(\mu_{0},\nu)}\mathbb{E}_{\pi}\left\{
-h(W,Y,\lambda)\right\}  \leq\sqrt{\frac{\log(\frac{1}{\rho})}{2n}}+2R_{n}(Lip(K_{\lambda}))
\]
and
\[
\min_{\pi\in\Pi_{W,Y}(\mu_{0},\nu)}\mathbb{E}_{\pi}\left\{  -h(W,Y,\lambda
)\right\}  -\min_{\pi\in\Pi_{W,Y}(\hat{\mu}_{n},\nu)}\mathbb{E}_{\pi}\left\{
-h(W,Y,\lambda)\right\}  \leq\sqrt{\frac{\log(\frac{1}{\rho})}{2n}}+2R_{n}(Lip(K_{\lambda}))
\]
\label{prop:concentration_2}
\end{prop}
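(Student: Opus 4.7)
The plan is to decompose the difference into a sample-fluctuation term and a bias term, controlling the former via Proposition~\ref{prop:prop1} and the latter by a Rademacher-type symmetrization over $Lip(K_\lambda)$.

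Set
\[
f(X_1,\ldots,X_n) := \min_{\pi\in\Pi_{W,Y}(\mu_n,\nu)}\mathbb{E}_\pi\{-h(W,Y,\lambda)\},\qquad f_\infty := \min_{\pi\in\Pi_{W,Y}(\mu_0,\nu)}\mathbb{E}_\pi\{-h(W,Y,\lambda)\}.
\]
By the Kantorovich--Rubinstein duality invoked just before Proposition~\ref{prop:prop1}, both quantities are suprema over $\alpha\in Lip(K_\lambda)$ of the functional $\alpha\mapsto \mathbb{E}_{\mu}\alpha(W)+g(\alpha)$ evaluated at $\mu=\mu_n$ and $\mu=\mu_0$ respectively, where $g(\alpha):=\mathbb{E}_\nu\alpha^h_\lambda(Y)$ does not depend on the sample. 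I would then write
\[
f - f_\infty = \bigl(f - \mathbb{E} f\bigr) + \bigl(\mathbb{E} f - f_\infty\bigr).
\]

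The first summand is handled directly by Proposition~\ref{prop:prop1}: inverting the sub-Gaussian tail yields that with probability at least $1-\rho$, $f - \mathbb{E} f \leq \sqrt{\log(1/\rho)/(2n)}$, with the $K_\lambda\cdot diam(\mathcal{S}_X)$ factor absorbed into the bound consistently with the statement.

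For the second summand I would use the elementary inequality $\sup_\alpha(A+B)-\sup_\alpha(A'+B)\leq \sup_\alpha(A-A')$, applied with $A(\alpha)=\mathbb{E}_{\mu_n}\alpha$, $A'(\alpha)=\mathbb{E}_{\mu_0}\alpha$, $B(\alpha)=g(\alpha)$, to cancel the coupling term and obtain
\[
\mathbb{E} f - f_\infty \leq \mathbb{E}\!\left[\sup_{\alpha\in Lip(K_\lambda)}\{\mathbb{E}_{\mu_n}\alpha(W) - \mathbb{E}_{\mu_0}\alpha(W)\}\right].
\]
A classical ghost-sample and Rademacher-variable symmetrization then upper-bounds the right-hand side by $2R_n(Lip(K_\lambda))$, proving the first inequality. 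For the second inequality the roles are reversed: I would apply the lower-tail version of Proposition~\ref{prop:prop1} to get $\mathbb{E} f - f \leq \sqrt{\log(1/\rho)/(2n)}$ with probability $1-\rho$, and combine this with the one-line observation that $\mathbb{E} f \geq f_\infty$ (plug any fixed $\mu_0$-maximizer into the sup defining $f$ and take expectation); the $2R_n(Lip(K_\lambda))$ term can be retained in the final statement for symmetry.

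The main obstacle is making sure the nonlinear $h$-transform $\alpha^h_\lambda$ inside $g$ does not enlarge the effective function class of the empirical process. This is finessed by the sup--sup inequality above, which cancels $g$ before any concentration argument is invoked, so that only $Lip(K_\lambda)$---and not a richer class involving the $h$-transform---enters the Rademacher bound.
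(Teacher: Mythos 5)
Your proposal is correct and follows essentially the same route as the paper: Kantorovich--Rubinstein duality to rewrite both minima as suprema over $Lip(K_\lambda)$, the decomposition $f-f_\infty=(f-\mathbb{E}f)+(\mathbb{E}f-f_\infty)$ with the first term controlled by Proposition~\ref{prop:prop1} and the second by the sup--sup inequality followed by symmetrization to $2R_n(Lip(K_\lambda))$ (the bound from \cite{luxburg2004distance}). Your treatment of the second inequality via the observation $\mathbb{E}f\geq f_\infty$ plus the lower tail of McDiarmid is a slightly cleaner substitute for the paper's ``the proof of the other inequality is similar,'' and your note about the $K_\lambda\cdot diam(\mathcal{S}_X)$ factor matches an imprecision already present in the paper's own constants.
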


\begin{proof}
 See Appendix \ref{apx_prf_prop_conc_2}.
\end{proof}

Define $q_{k} =
\begin{cases}
\lambda\delta, \quad & k=1\\
\left(  2\cdot L^{\frac{k}{k-1}}(\widetilde{c}) + 1 \right)  \delta^{\frac{1}{k}},
\quad & k>1
\end{cases}
$. Now for the event of interest we have

\begin{align*}
&  \left\{  G_{0}(\mu_{0}, \nu) \leq G_{\delta}(\mu_{n}, \nu) +
\epsilon(n,\delta,\zeta,K_{\lambda}) + q_{k} \right\} \\
=  &  \left\{  q_{k} \geq G_{0}(\mu_{0}, \nu) + \min_{\lambda\geq0}
\left\{  \delta\lambda+ \max_{\pi\in\Pi_{W,Y}(\mu_{n},\nu) } \mathbb{E}_{\pi}
h(W,Y,\lambda) \right\}  - \epsilon(n,\delta,\zeta,K_{\lambda}) \right\}
\\
=  &  \left\{  \exists {\lambda\geq0}: q_{k} \geq\delta\lambda+
\underbrace{\max_{\pi\in\Pi_{W,Y}(\mu_{n}, \nu)} \mathbb{E}_{\pi}
h(W,Y,\lambda) - \max_{\pi\in\Pi_{X,Y}(\mu_{0}, \nu)} \mathbb{E}_{\pi}
h(X,Y,\lambda)- \epsilon(n,\delta,\zeta,K_{\lambda})}_{(I)} \right. \\
&  + \left.  \underbrace{\min_{\pi\in\Pi_{X,Y}(\mu_{0}, \nu)} \mathbb{E}_{\pi}
\widetilde{c}(X,Y) + \max_{\pi\in\Pi_{X,Y}(\mu_{0}, \nu)} \mathbb{E}_{\pi}
h(X,Y,\lambda)}_{(II)} \right\}  .
\end{align*}

In the remainder, we show that the above event occurs with probability at
least $1-\rho$.

\begin{lemma}
Let $(\mathcal{S},d)$ present a compact metric space. Also let $f:\mathcal{S}\rightarrow\mathbb{R}$ be 
an L-Lipschitz function (for $L>0$) (i.e. for $x,y\in\mathcal{S}$, $|f(x)-f(y)|\leq L\cdot d(x,y)$.
For $k\geq1,\lambda>0$, define $y_{x}:=\argmax_{y\in\mathcal{S}}\left\{
f(y)-\lambda\cdot d^{k}(x,y)\right\}  $) . Then for $k=1$ and $\lambda>L$,
\[
y_{x}=x
\]
Also for $k>1$,
\[
d(y_{x},x)\leq\left(  L/\lambda\right)  ^{1/\left(  k-1\right)  }.
\]
\label{approxm_lip}
\end{lemma}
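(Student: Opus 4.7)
The plan is to exploit two bounds on the quantity $f(y_x) - f(x)$: a lower bound coming from the optimality of $y_x$, and an upper bound coming from the Lipschitz property of $f$. Combining these will directly pin down how far $y_x$ can be from $x$. Before starting, I note that $y_x$ is well-defined because $\mathcal{S}$ is compact and $y \mapsto f(y) - \lambda d^k(x,y)$ is continuous (as $f$ is Lipschitz hence continuous, and $d^k$ is continuous), so the supremum is attained.

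First, I would use that $y_x$ maximizes $y \mapsto f(y) - \lambda d^k(x,y)$ on $\mathcal{S}$, so comparing against the feasible point $y = x$ yields
\[
f(y_x) - \lambda d^k(x, y_x) \;\geq\; f(x) - \lambda d^k(x,x) \;=\; f(x),
\]
i.e.\ $f(y_x) - f(x) \geq \lambda d^k(x, y_x)$. Second, the $L$-Lipschitz property of $f$ gives $f(y_x) - f(x) \leq L\, d(x, y_x)$. Chaining the two inequalities produces
\[
\lambda\, d^k(x, y_x) \;\leq\; L\, d(x, y_x).
\]
If $d(x,y_x)=0$, both conclusions of the lemma hold trivially; otherwise, dividing by $d(x,y_x)$ yields $\lambda\, d^{k-1}(x, y_x) \leq L$.

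For $k=1$, this reads $\lambda \leq L$, which contradicts the hypothesis $\lambda > L$, so the only consistent possibility is $d(x, y_x) = 0$, i.e.\ $y_x = x$. For $k > 1$, the same inequality rearranges directly to the claimed bound $d(y_x, x) \leq (L/\lambda)^{1/(k-1)}$. There is no genuine obstacle in this argument; the lemma is essentially a one-line trade-off between the penalty growth rate $d^k$ and the modulus of variation of $f$, and the only care needed is the trivial case $d(x,y_x)=0$ and the existence of the maximizer.
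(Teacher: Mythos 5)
Your proof is correct and follows essentially the same argument as the paper's: compare $y_x$ against the candidate $y=x$ to get $\lambda\, d^k(x,y_x) \leq f(y_x)-f(x) \leq L\, d(x,y_x)$, then factor out $d(x,y_x)$. Your version is slightly more careful in explicitly noting the attainment of the maximum and the trivial case $d(x,y_x)=0$, but the substance is identical.
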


\begin{proof}
See Appendix \ref{apx_prf_lip_lemma}.
\end{proof}

For $(I)$, using Proposition \ref{prop:concentration_2} with probability at least
$1-\rho$:
\begin{align*}
&  \max_{\pi\in\Pi_{X,Y}(\mu_{n}, \nu)} \mathbb{E}_{\pi} h(X,Y,\lambda) -
\max_{\pi\in\Pi_{X,Y}(\mu_{0}, \nu)} \mathbb{E}_{\pi} h(X,Y,\lambda) -
\epsilon(n,\delta,\zeta,K_{\lambda})\\
\leq\;  &  \sqrt{\frac{\log(\frac{1}{\rho})}{2n} } + R_{n}(Lip(K_{\lambda})) -
\epsilon(n,\delta,\zeta,K_{\lambda})\\
\overset{(a)}{\leq} \;  &  0
\end{align*}

where (a) is due to Theorem 18 of \cite{luxburg2004distance}.

For $(II)$:
\begin{align*}
&  \min_{\pi\in\Pi_{X,Y}(\mu_{0}, \nu)} \mathbb{E}_{\pi} \widetilde{c}(X,Y) +
\max_{\pi\in\Pi_{X,Y}(\mu_{0}, \nu)} \mathbb{E}_{\pi} h(X,Y,\lambda) \\ =&
\max_{\pi\in\Pi_{X,Y}(\mu_{0}, \nu)} \mathbb{E}_{\pi} h(X,Y,\lambda) -
\max_{\pi\in\Pi_{X,Y}(\mu_{0}, \nu)} \mathbb{E}_{\pi} \{-\widetilde{c}(X,Y)\}\\
\leq &  \max_{\pi\in\Pi_{X,Y}(\mu_{0}, \nu)} \mathbb{E}_{\pi} \{
h(X,Y,\lambda) + \widetilde{c}(X,Y) \}
\end{align*}

For $k=1$, Lemma \ref{approxm_lip} indicates $h(x,y,\lambda) + \widetilde{c}(x,y) =
0$ for $\lambda> L(\widetilde{c})$ and all $(x,y)$. This concludes the proof
for $k=1$.\newline

For $k>1$, Lemma \ref{approxm_lip} shows that for all $(x,y)$,
\begin{align*}
h(x,y,\lambda) + \widetilde{c}(x,y) \leq L(\widetilde{c}) \left(  \frac{L(\widetilde{c})}{\lambda} \right)  ^{\frac{1}{k-1}} + 
\lambda\left(  \frac{L(\widetilde{c})}{\lambda} \right)  ^{\frac{k}{k-1}} = 2 \frac{L^{\frac{k}{k-1}}(\widetilde{c})}{\lambda^{\frac{1}{k-1}}}.
\end{align*}

Now setting $\lambda=(\frac{1}{\delta})^{\frac{k-1}{k}}$, we get
\[
\delta\lambda+(II)\leq(2\cdot L^{\frac{k}{k-1}}(\widetilde{c})+1)\delta^{\frac{1}{k}}=q_{k}.
\]

\subsection{Additional Comments}
From Theorem 18 of \cite{luxburg2004distance}, for connected and centered sets
$\mathcal{S}_{X}$, with the following (tighter) definition for $\epsilon
(n,\rho,\zeta,K_{\lambda})$, Theorem \ref{thm:delta_sel} still holds.
\begin{align*}
\sqrt{\frac{\log(\frac{1}{\rho})}{2n} } + 4\zeta K_{\lambda} + \frac{8\sqrt
{2}K_{\lambda}}{\sqrt{n}}\int_{\zeta/4}^{2diam(\mathcal{S}_{X})} \sqrt{
\mathcal{N}(\mathcal{S}_{X},d_{X},\xi/2) \log2 + \log\left(  2\left\lceil
\frac{2diam(\mathcal{S}_{X})}{\xi} \right\rceil +1 \right)  }d{\xi}
\end{align*}

In particular, for $\mathcal{S}_{X} = [0,1]^{d}$ since $\mathcal{N}(\mathcal{S}_{X},d_{X},\xi) \leq\frac{H}{\xi^{d}}$ 
for some $H>0$, we get that for all $\zeta>0$ and $d>2$,
\begin{align}
\epsilon(n,\rho,\zeta, K_{\lambda}) \leq\sqrt{\frac{\log(\frac{1}{\rho})}{n}
}+ 4\zeta K_{\lambda}+ \frac{8\sqrt{2}K_{\lambda}}{\sqrt{n}} \left(  \frac{
2^{d/2}\sqrt{H\log2} \,\cdot\, (\zeta/4)^{(-d/2+1)}}{d/2-1} + 10 \cdot
diam(\mathcal{S}_{X})\right)  \label{eq_epsilon}
\end{align}

Finding the minimizing $\zeta$ results in
\begin{align*}
\epsilon(n,\rho,\zeta, K_{\lambda}) \leq\sqrt{\frac{\log(\frac{1}{\rho})}{n}
}+ 32 K_{\lambda} \left(  \frac{ 8\log2 \cdot H }{n}\right) ^{{1}/{d}} +
\frac{8 K_{\lambda}(8\log2\cdot H)^{1/d}}{(d/2 -1) \cdot n^{1/d}} +
\frac{(80\sqrt{2}) diam(\mathcal{S}_{X}) K_{\lambda}}{\sqrt{n}}
\end{align*}

The $n^{-1/d}$ factor (curse of dimensionality) aligns with
\cite{dudley1969speed,weed2017sharp}. However, (\ref{eq_epsilon}) also
suggests that fixing $\zeta$ results in an expression for $\epsilon$ which is
a constant plus an term of order $n^{-1/2}$. So, if one is willing to
compromise for a bias term $\Theta(\zeta)$, a convergence rate of $n^{-1/2}$
is attainable by fixing $\zeta$.

\section{Proof of Proposition \ref{prop:prop1}}

\label{apx_prf_prop1}

Using McDiarmid's inequality \cite{boucheron2013concentration}, it suffices to
show that $f(\cdot)$ satisfies the bounded difference condition.

Let $X_{1},\ldots, X_{n},X^{\prime}_{n}$ be i.i.d samples from the measure
$\mu_{0}$. Let $\mu_{n},\mu^{\prime}_{n}$ be the empirical measures associated
with $X_{1},\ldots,X_{n-1} ,X_{n}$ and $X_{1},\ldots, X_{n-1}, X^{\prime}_{n}$ respectively.

\begin{align*}
&  |f(X_{1},\ldots,X_{n}) - f(X_{1},\ldots,X^{\prime}_{n})| = \left|
\sup_{\alpha(\cdot) \in Lip(K_{\lambda})} \{ \mathbb{E}_{\mu_{n}}\alpha(W) +
\mathbb{E}_{\nu}\alpha^{h}_{\lambda}(Y) \} - \right. \\
&  \left.  \sup_{\alpha(\cdot) \in Lip(K_{\lambda})} \{ \mathbb{E}_{\mu^{\prime}_{n}}\alpha(W) + 
\mathbb{E}_{\nu}\alpha^{h}_{\lambda}(Y) \}
\right|  \leq\left|  \sup_{\alpha(\cdot) \in Lip(K_{\lambda})} \{
\mathbb{E}_{\mu_{n}}\alpha(W) - \mathbb{E}_{\mu^{\prime}_{n}}\alpha(W) \}
\right|  =\\
&  \left|  \sup_{\alpha(\cdot) \in Lip(K_{\lambda})} \frac{\alpha(X_{n}) -
\alpha(X^{\prime}_{n})}{n} \right|  \leq\left|  \sup_{\alpha(\cdot) \in
Lip(K_{\lambda})} \frac{\alpha(X_{n}) - \alpha(X^{\prime}_{n})}{n} \right|
\leq\\
&  \frac{K_{\lambda}}{n}d(X_{n},X^{\prime}_{n}) \leq\frac{K_{\lambda}}{n}diam(\mathcal{S}_{X})
\end{align*}

\section{Proof of Proposition \ref{prop:concentration_2}}

\label{apx_prf_prop_conc_2}

\begin{align*}
&  \min_{\pi\in\Pi_{W,Y}(\hat{\mu}_{n},\nu)} \mathbb{E}_{\pi}\left\{  -h(W,Y,\lambda
)\right\}  - \min_{\pi\in\Pi_{W,Y}(\mu_{0},\nu)} \mathbb{E}_{\pi}\left\{
-h(W,Y,\lambda)\right\}  =\\
&  \sup_{\alpha(\cdot) \in Lip(K_{\lambda})} \{ \mathbb{E}_{\hat{\mu}_{n}}\alpha(W) + \mathbb{E}_{\nu}\alpha^{h}_{\lambda}(Y) \} - \sup_{\alpha
(\cdot)\in Lip(K_{\lambda})} \{ \mathbb{E}_{\mu_{0}}\alpha(W) + \mathbb{E}_{\nu}\alpha^{h}_{\lambda}(Y) \}
\end{align*}
Using Proposition \ref{prop:prop1}, with probability at least $1 - \rho$, the above
expression is less than or equal to
\begin{align*}
&  \sqrt{\frac{\log(\frac{1}{\rho})}{2n} } + \mathbb{E}\left[  \sup
_{\alpha(\cdot) \in Lip(K_{\lambda})} \{ \mathbb{E}_{\hat{\mu}_{n}}\alpha(W) +
\mathbb{E}_{\nu}\alpha^{h}_{\lambda}(Y) \} - \sup_{\alpha(\cdot) \in
Lip(K_{\lambda})} \{ \mathbb{E}_{\mu_{0}}\alpha(W) + \mathbb{E}_{\nu}
\alpha^{h}_{\lambda}(Y) \} \right] \\
\leq\;  &  \sqrt{\frac{\log(\frac{1}{\rho})}{2n} } + \mathbb{E}\left[
\sup_{\alpha(\cdot) \in Lip(K_{\lambda})} \{ \mathbb{E}_{\hat{\mu}_{n}}\alpha(W) - \mathbb{E}_{\mu_{0}}\alpha(W) \} \right] \\
\overset{(a)}{\leq}\;  &  \sqrt{\frac{\log(\frac{1}{\rho})}{2n} } +
2R_{n}(Lip(K_{\lambda}))
\end{align*}

where (a) is an error bound result stated in \cite{luxburg2004distance}.
The proof of the other inequality is similar.

\section{Proof of Lemma \ref{approxm_lip}}

\label{apx_prf_lip_lemma}

\begin{align*}
&  f(x) = f(x) - \lambda\cdot d^{k}(x,x) \leq f(y_{x}) - \lambda\cdot
d^{k}(x,y_{x})\\
\Leftrightarrow &  \lambda\cdot d^{k}(x,y_{x}) \leq f(y_{x}) - f(x) \leq
L\cdot d(x,y_{x})\\
\Leftrightarrow &  (\lambda d^{k-1}(x,y_{x}) - L) d(x,y_{x}) \leq 0
\end{align*}
So for $k=1$ and $\lambda> L$, $d(y_{x},x) = 0 $. Also for $k>1$, $d(y_{x},x)
\leq {(\frac{L}{\lambda})}^{\frac{1}{k-1}}$.

\section{Additional Experiment Results for Section \ref{sec:wgan_experiments}}

Figures \ref{fig::fig4},\ref{fig::fig5} provide additional results for Section \ref{sec:wgan_experiments}. 
The initializations in these figures are different from Figures \ref{fig::fig1},\ref{fig::fig2}.

\begin{figure}[ptbh]
\centering
\begin{subfigure}[b]{0.45\textwidth}
\includegraphics[width=\textwidth]{./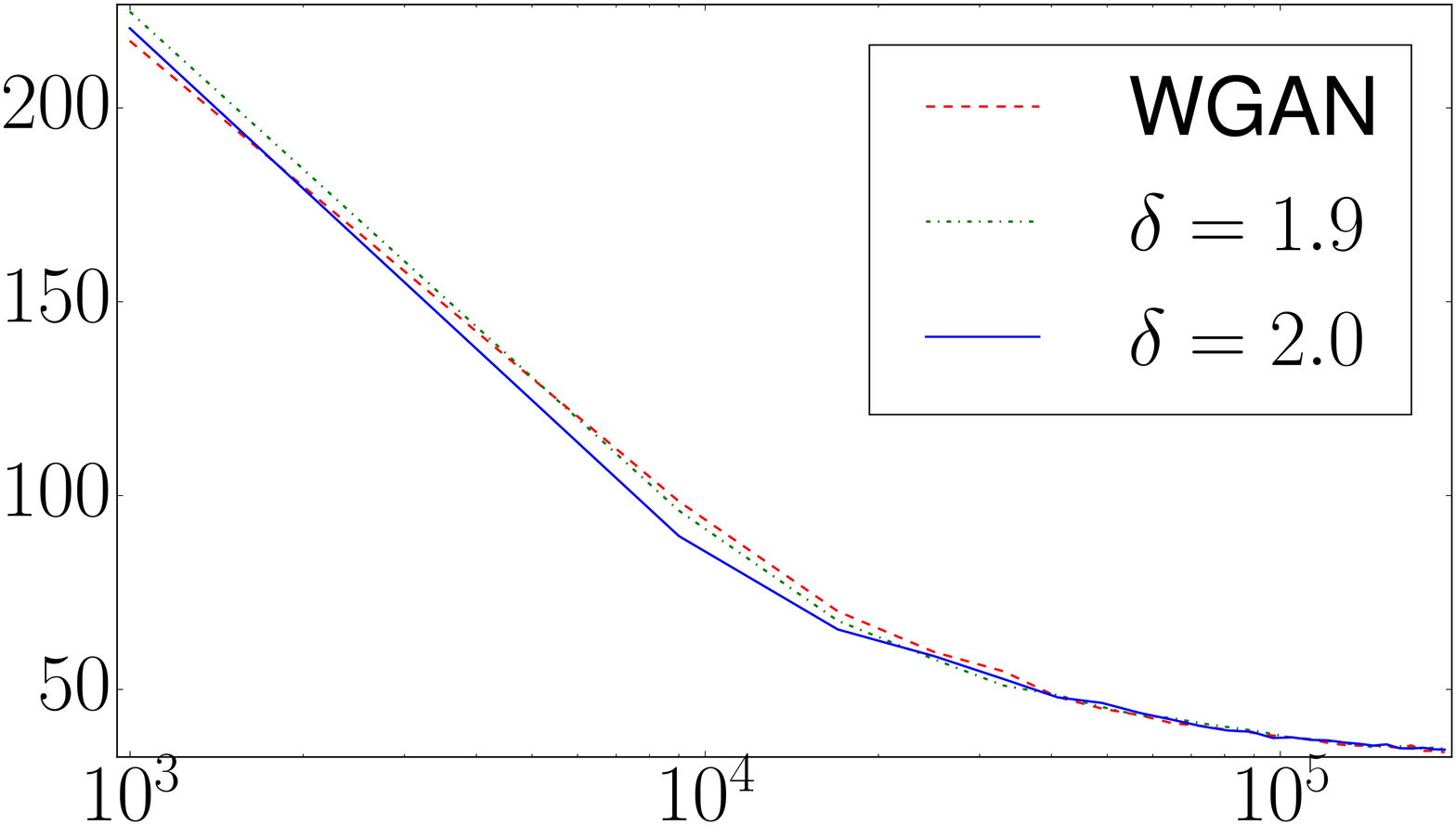}
\caption{}
\end{subfigure}
~ \begin{subfigure}[b]{0.45\textwidth}
\includegraphics[width=\textwidth]{./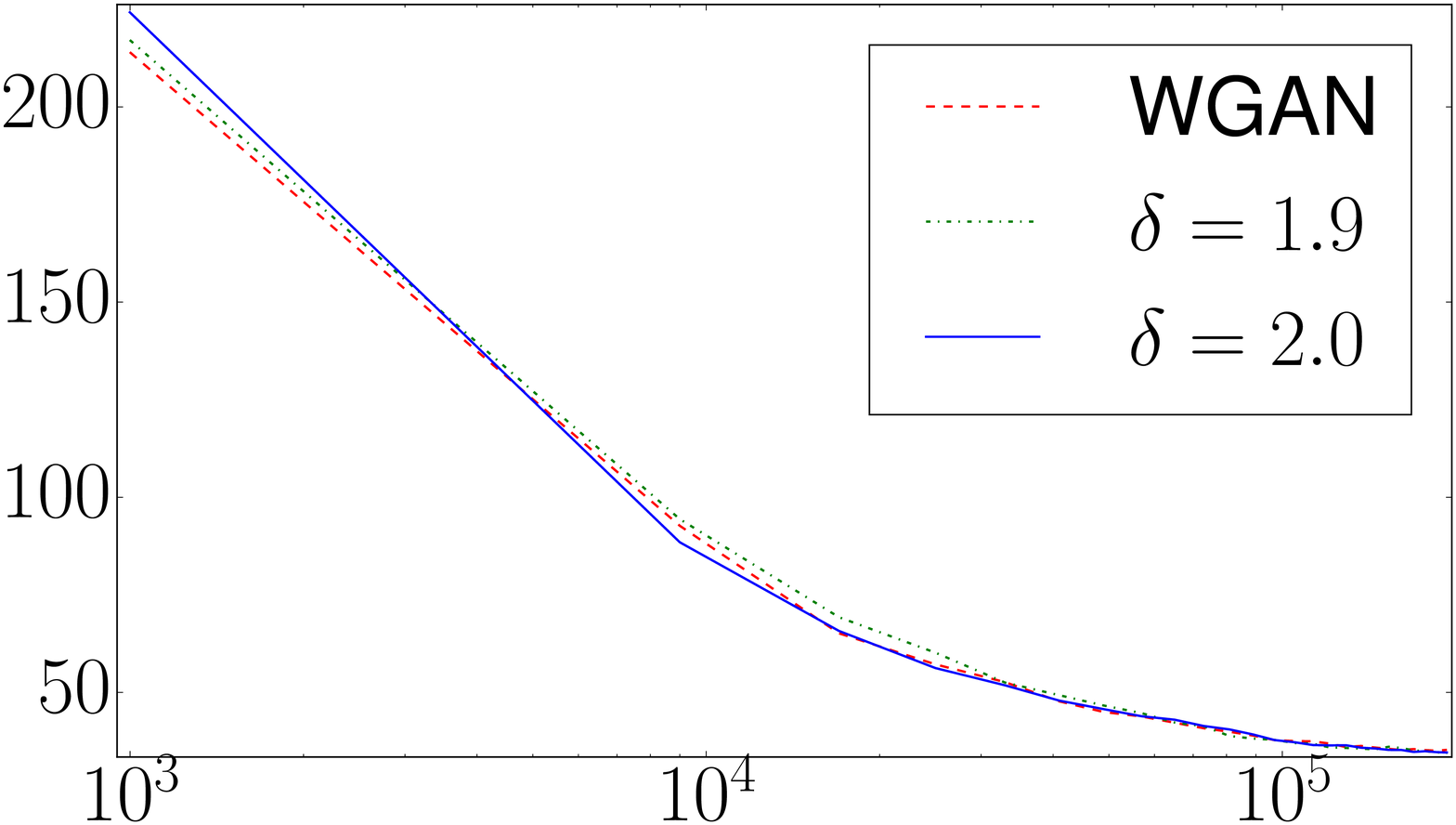}
\caption{}
\end{subfigure}
\caption{FID versus generator iteration on CIFAR10 for comparison of OTR WGAN
and WGAN. Each subfigure presents a different initialization. These initializations are different from Figure \ref{fig::fig1}. 
In the legend,
$\delta$ is the OTR WGAN parameter.}
\label{fig::fig4}
\end{figure}

\begin{figure}[ptbh]
\centering
\begin{subfigure}[b]{0.45\textwidth}
\includegraphics[width=\textwidth]{./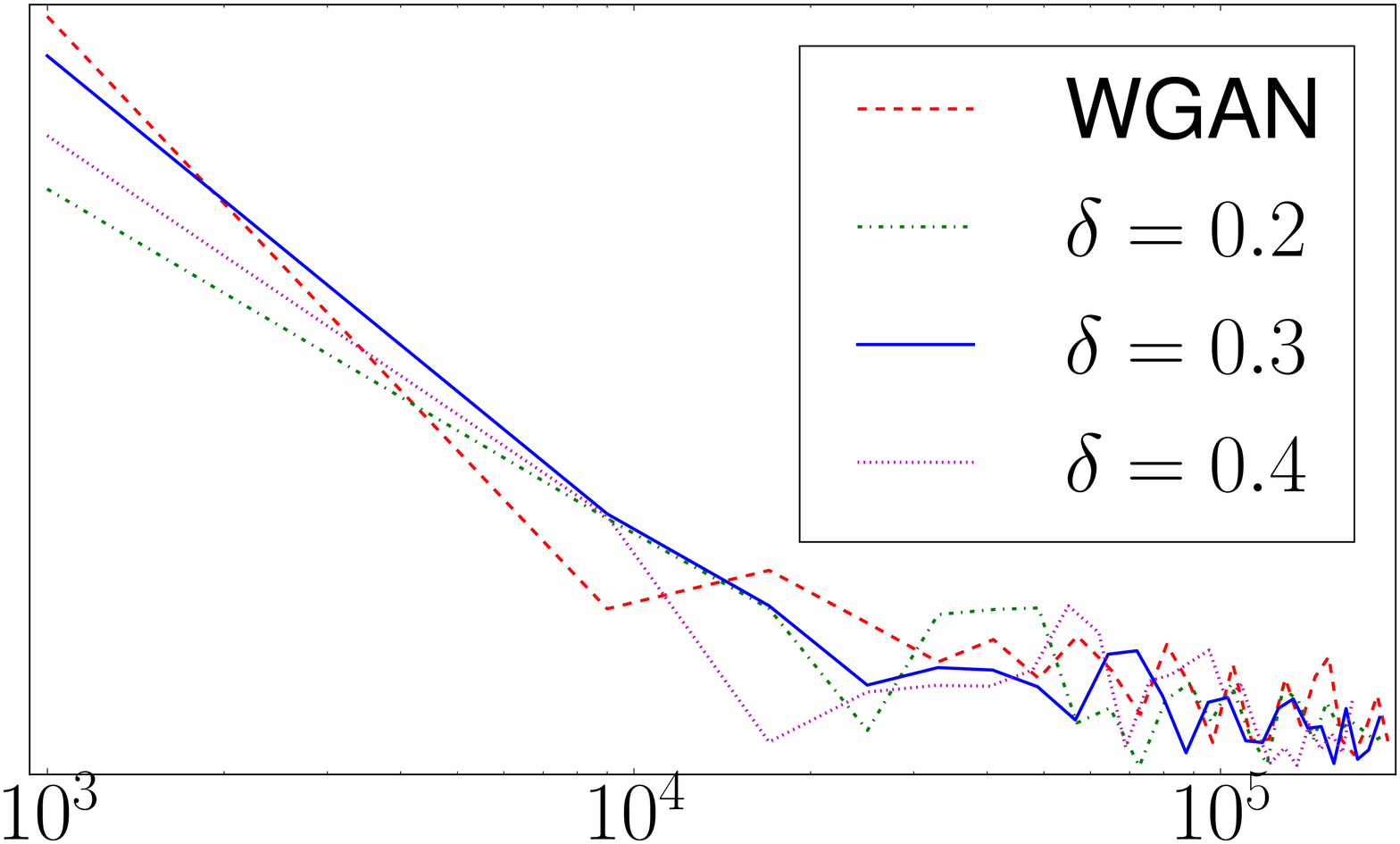}
\caption{}
\end{subfigure}
~ \begin{subfigure}[b]{0.45\textwidth}
\includegraphics[width=\textwidth]{./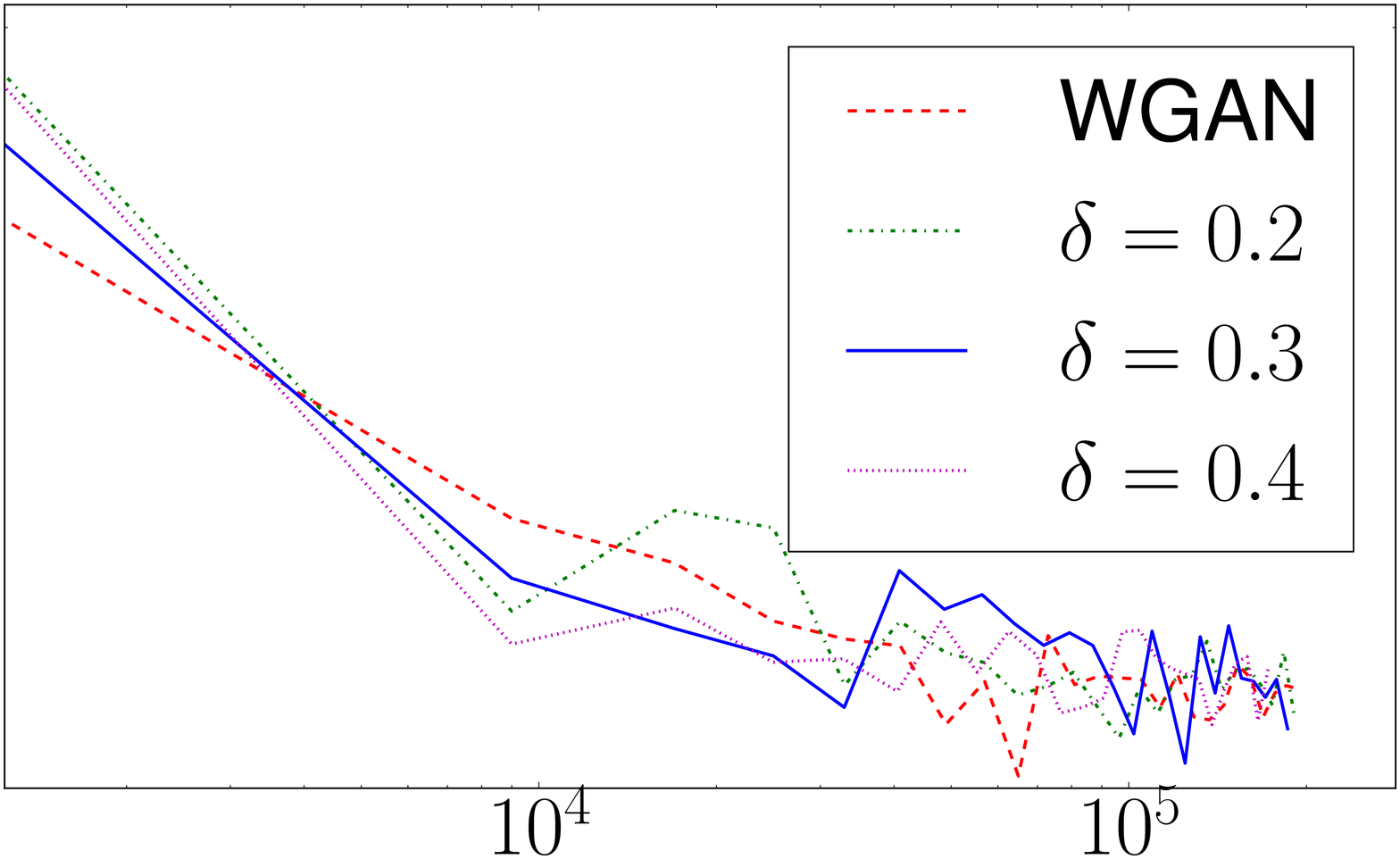}
\caption{}
\end{subfigure}
\caption{FID versus generator iteration on MNIST for comparison of OTR WGAN
and WGAN. Each subfigure presents a different initialization. 
These initializations are different from Figure \ref{fig::fig2}. 
In the legend, $\delta$ is the OTR WGAN parameter.}
\label{fig::fig5}
\end{figure}

\end{document}